\newtheorem{theorem}{Theorem} 
\newtheorem{corollary}{Corollary} 
\def\BibTeX{{\rm B\kern-.05em{\sc i\kern-.025em b}\kern-.08em
    T\kern-.1667em\lower.7ex\hbox{E}\kern-.125emX}}
\renewcommand{\alglinenumber}[1]{\footnotesize #1:}
\begin{document}

\title{Registration of 3D Point Sets Using Exponential-based Similarity Matrix}
\author{Ashutosh Singandhupe, Sanket Lokhande, Hung Manh La
\thanks{Corresponding author: Hung La, E-mail: hla@unr.edu}
\thanks{The authors are with the Advanced Robotics and Automation (ARA) Laboratory, Department of Computer Science and Engineering, University of Nevada, Reno, NV 89557, USA}}

\markboth{Journal of \LaTeX\ Class Files,~Vol.~18, No.~9, September~2020}%
{How to Use the IEEEtran \LaTeX \ Templates}

\maketitle

\begin{abstract}
Point cloud registration is a fundamental problem in computer vision and robotics, involving the alignment of 3D point sets captured from varying viewpoints using depth sensors such as LiDAR or structured light. In modern robotic systems, especially those focused on mapping, it is essential to merge multiple views of the same environment accurately. However, state-of-the-art registration techniques often struggle when large rotational differences exist between point sets or when the data is significantly corrupted by sensor noise. These challenges can lead to misalignments and, consequently, to inaccurate or distorted 3D reconstructions. In this work, we address both these limitations by proposing a robust modification to the classic Iterative Closest Point (ICP) algorithm. Our method, termed Exponential Similarity Matrix ICP (ESM-ICP), integrates a Gaussian-inspired exponential weighting scheme to construct a similarity matrix that dynamically adapts across iterations. This matrix facilitates improved estimation of both rotational and translational components during alignment. We demonstrate the robustness of ESM-ICP in two challenging scenarios: (i) large rotational discrepancies between the source and target point clouds, and (ii) data corrupted by non-Gaussian noise. Our results show that ESM-ICP outperforms traditional geometric registration techniques as well as several recent learning-based methods. To encourage reproducibility and community engagement, our full implementation is made publicly available on GitHub. \url{https://github.com/aralab-unr/ESM_ICP}
\end{abstract}

\begin{IEEEkeywords}
Registration, 3D point cloud, Iterative Closest Point
\end{IEEEkeywords}

\section{Introduction}
\label{sec:Intro}
Aligning point clouds/point-sets is a fundamental and active area of research in the field of computer vision and robotics. Also referred to as geometric registration, it involves aligning two 3D point sets representing the same rigid structure or scene, typically captured from different viewpoints using 3D sensors such as LiDAR or structured-light scanners. Applications span diverse domains, including robotics, medical imaging, autonomous driving, and computational chemistry, where accurate alignment of 3D data is crucial. Formally, registration aims to estimate the rigid transformation—comprising rotation and translation—that best aligns a source point cloud with a target point cloud by minimizing a suitable alignment error metric. In this paper, we use the terms: source and the target to denote the point clouds being aligned throughout the discussion. Among the earliest and most widely adopted registration techniques is the Iterative Closest Point (ICP) algorithm~\cite{10.1109/TPAMI.1987.4767965,121791}, which computes correspondences between point sets and refines the transformation iteratively. While ICP performs well when the initial misalignment between the source and the target is small, it is prone to convergence to local minima in the presence of large rotational or translational offsets. In practice, the absence of a good initial estimate and the presence of noise or outliers often degrade ICP's performance. Over the past three decades, numerous variants of ICP have been proposed to address these limitations. These include point-to-plane ICP~\cite{articleptp}, non-linear ICP~\cite{granger2002em,zhou2015affine}, Generalized ICP, and Normal Distributions Transform (NDT)~\cite{ndt1249285}. While these methods offer improvements in specific scenarios, they remain sensitive to large misalignments and noisy data. Alternative techniques such as Go-ICP~\cite{yang2015goicp} and others~\cite{FITZGIBBON20031145,rusinkiewicz2001efficient,si2022review} attempt global optimization over the motion space SE(3), but often incur high computational costs and can still fail under large rotations.

With the advent of deep learning, several learning-based registration methods have emerged, including Deep Closest Point (DCP)~\cite{wang2019deep}, PointNetLK~\cite{aoki2019pointnetlk}, PRNet~\cite{wang2019prnet}, and DeepGMR~\cite{jian2005robust}. While these methods demonstrate promising results on benchmark datasets, they often exhibit limited generalizability and robustness under extreme transformations or noise, particularly when initial alignment is poor or the data is corrupted.

In this paper, we propose a novel registration framework called Exponential Similarity Matrix ICP (ESM-ICP), designed to address the shortcomings of both traditional and learning-based approaches. Building upon the work ~\cite{Ashu_MCCEKFSec, asingandhupej_2018, asingandhupec_2018, SehgalISVC2019, AshuIRC2018}, ESM-ICP introduces a similarity matrix that captures weighted correspondences between the source and target point clouds using a Gaussian-based exponential kernel. This matrix is sparse and updated iteratively during registration, enhancing robustness to large transformations and outliers. 
Key features of our method include: $(1)$ The use of a similarity matrix that encodes the confidence of point correspondences, updated at every iteration. $(2)$ Robust performance under large rotation and translation ranges, even when the point sets are heavily misaligned or corrupted by noise. $(3)$ Superior alignment quality compared to state-of-the-art geometric and learning-based methods. We evaluate ESM-ICP on both clean and noisy datasets, including randomized rigid transformations and non-Gaussian noise injections into one of the 3D point sets. Qualitative results are visualized in Fig.~\ref{fig:ESM_ICP_results}, where the red colored point set denotes the source and the green colored point set denotes the target. In Section \ref{Section_4}, we perform a quantitative evaluation of our approach against several baselines and advanced methods.

Our contributions are summarized as follows:  We introduce ESM-ICP, a robust registration method that incorporates a Gaussian-inspired similarity matrix within the ICP framework. We evaluate ESM-ICP under large transformation scenarios, using random rotations sampled from $[-\pi, \pi]$ and translations across a wide range. We demonstrate improved accuracy over competing methods across multiple datasets and scenarios. We evaluate performance in the presence of outliers and non-Gaussian noise, where our method consistently outperforms traditional and deep learning-based approaches. We publicly release our implementation source codes for reproducibility and community benchmarking.

\begin{figure}[!t]
    \centering
    \begin{tabular}{ccc}
        \small  Input & \small  Input & \small Input \\
        \includegraphics[width=0.28\columnwidth]{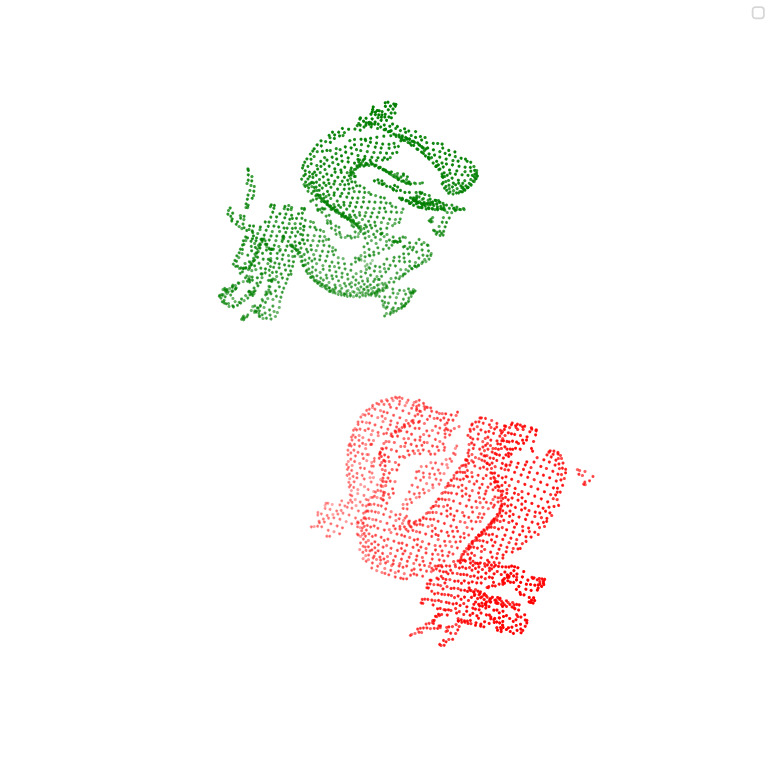} &
        \includegraphics[width=0.28\columnwidth]{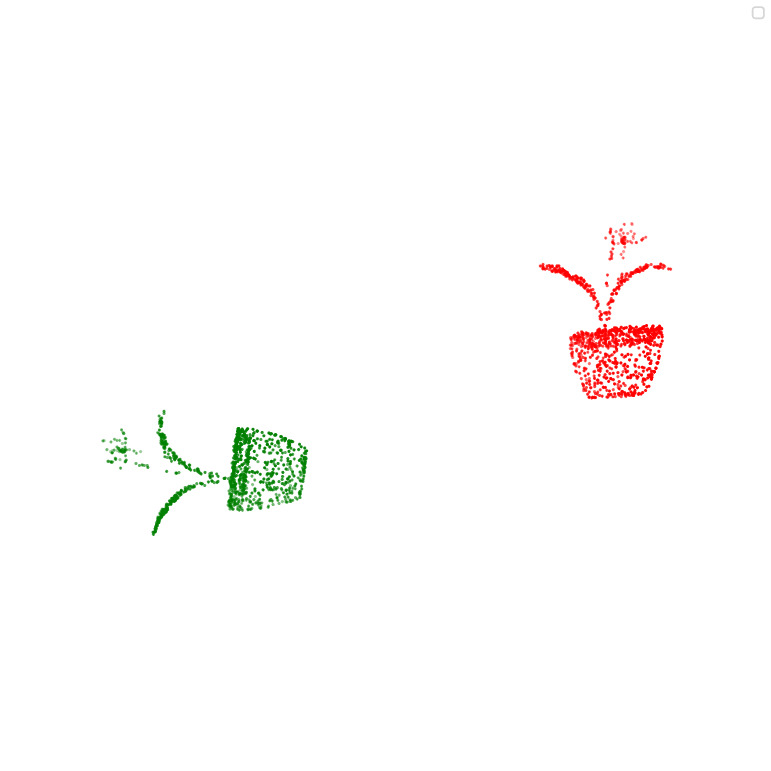} &
        \includegraphics[width=0.28\columnwidth]{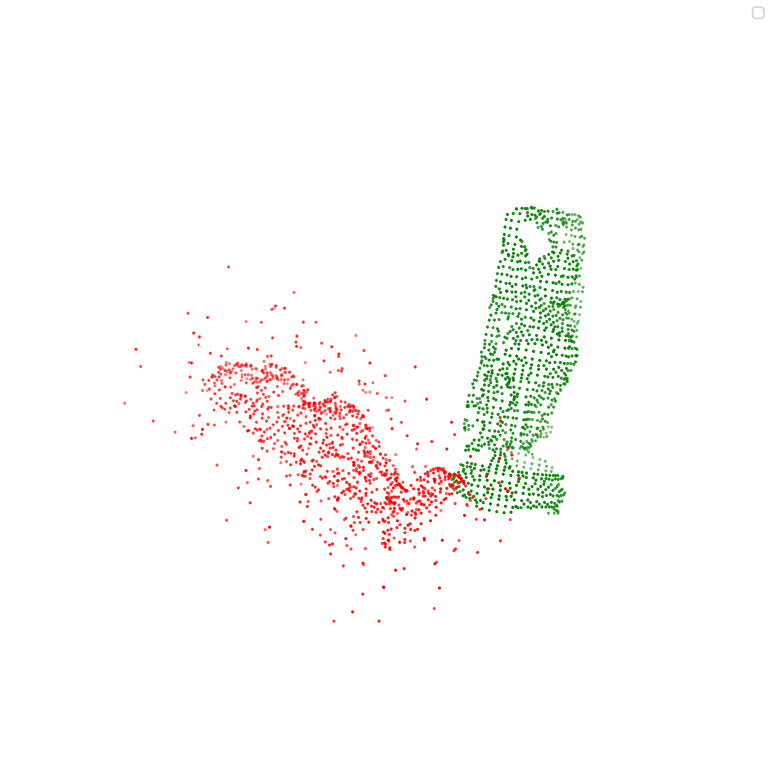} \\
       
        \includegraphics[width=0.28\columnwidth]{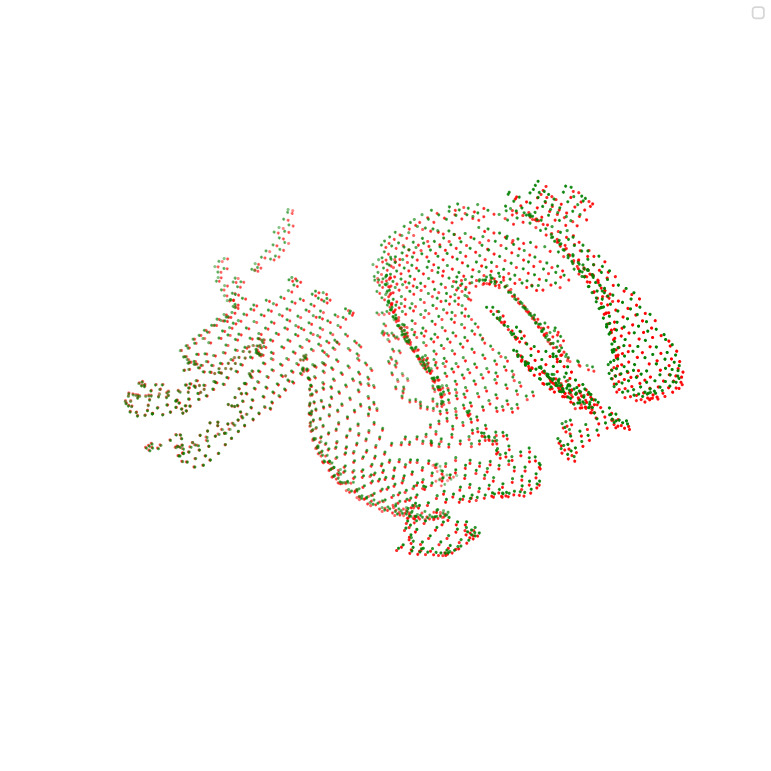} &
        \includegraphics[width=0.28\columnwidth]{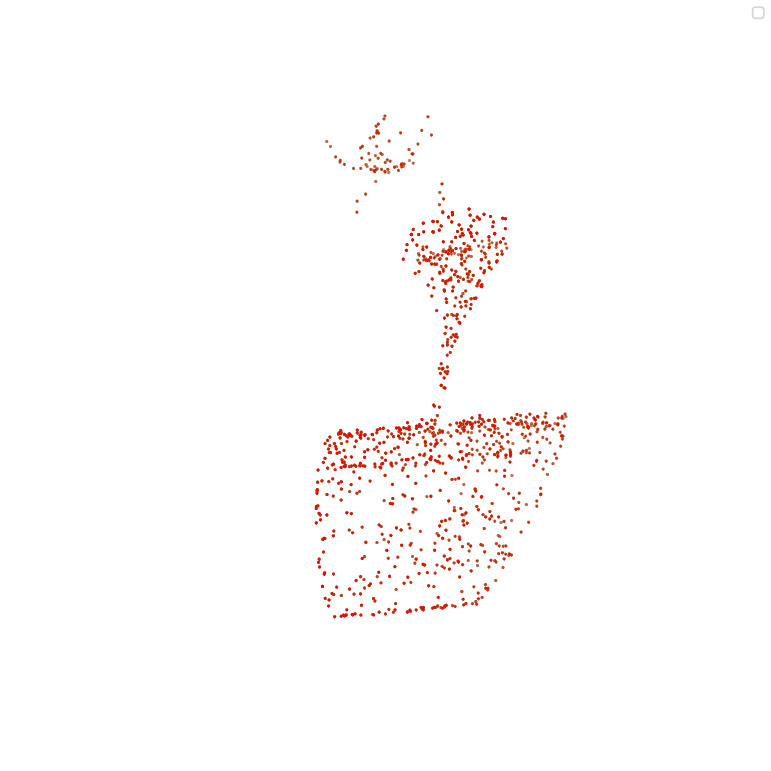} &
        \includegraphics[width=0.28\columnwidth]{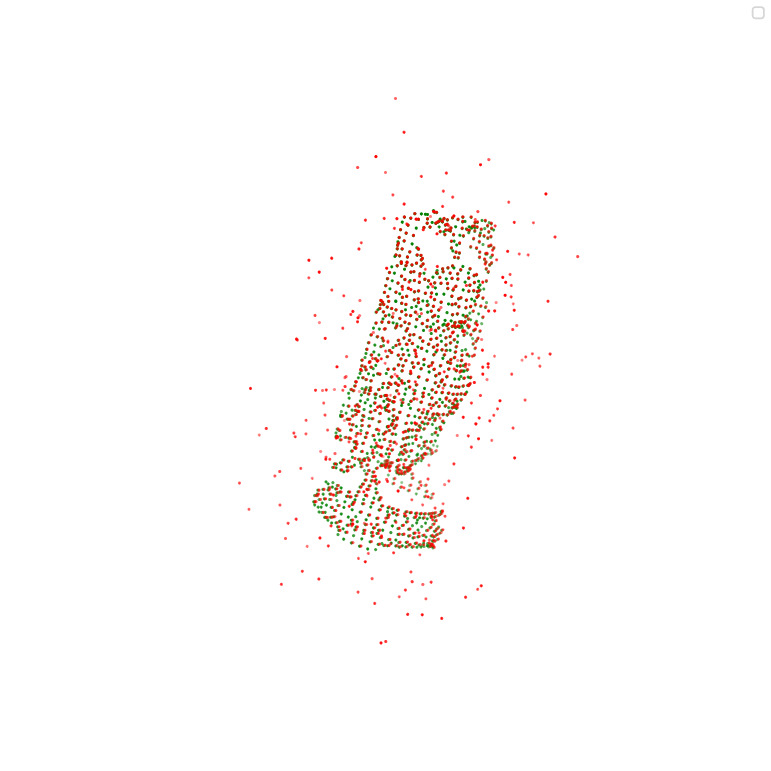} \\
        \small ESM-ICP & \small ESM-ICP & \small ESM-ICP
    \end{tabular}
    \caption{Registration results using ESM-ICP. Top row: input source and target pairs. Bottom row: aligned outputs after applying ESM-ICP.}
    \label{fig:ESM_ICP_results}
    \vspace{-0.5em}
\end{figure}

\section{Related Work}
\label{Background}
The earliest solution to the point cloud registration problem is the classic Iterative Closest Point (ICP) algorithm~\cite{10.1109/TPAMI.1987.4767965,121791,articlewu,zhang2024deep}. ICP iteratively matches corresponding points between two 3D point sets and solves a least-squares problem to refine the transformation. The rotation is computed using Singular Value Decomposition (SVD), while the translation is determined by the difference between the centroids of the point sets. However, traditional ICP struggles with large initial misalignments, is highly sensitive to the initial pose, and is prone to getting trapped in local minima~\cite{si2022review}.

To overcome these limitations, numerous variants of ICP have been developed~\cite {articlewu,huang2021comprehensive}. For instance, Point-to-Plane ICP~\cite{articleptp} incorporates surface normal information from the target point cloud, which accelerates convergence and improves accuracy on smooth surfaces by minimizing the error along the target normal direction. While more efficient than classic ICP, it remains sensitive to large rotations and noise. Generalized ICP (GICP)~\cite{segal2009generalized} addresses this by modeling each point as a Gaussian distribution and using local surface covariances to minimize the Mahalanobis distance between point pairs. This enables alignment based on matching ellipsoidal uncertainty regions rather than simple points or planes.

ICP has also been formulated as an optimization problem, solvable via algorithms like the Levenberg–Marquardt method~\cite{rusu20113d}. Additional robust variants employ outlier trimming techniques to enhance performance in the presence of noise. Beyond ICP-style algorithms, branch-and-bound (BnB) methods like GO-ICP~\cite{yang2015goicp} explore the full SE(3) motion space for global optimization. Although more computationally intensive, these methods offer improved robustness and a higher likelihood of finding the global optimum.

Beyond ICP, probabilistic methods such as the Normalized Distribution Transform (NDT)~\cite{ndt1249285} have been proposed. NDT models local point distributions within voxel grids, eliminating the need for explicit point correspondences and often achieving faster and more accurate registration - though it, too, struggles with large rotations. Another category of approaches reduces registration complexity using four-point congruent sets, which significantly decrease the spatial search space and improve robustness to noise and outliers ~\cite {kuglin1975phase,zitova2003image,guizar2008efficient}. 

Feature-based approaches like RANSAC are also widely adopted, often inspired by 2D image registration techniques like SIFT~\cite{lowe2004distinctive,bay2006surf}. In 3D, descriptors such as Fast Point Feature Histograms (FPFH)~\cite{rusu2009fast} are used to extract keypoints and estimate transformations. Fast Global Registration (FGR) ~\cite{zhou2016fast} is one such example. Similarly, Sample Consensus Initial Alignment (SAC-IA)~\cite{rusu2009sac} uses feature correspondences for coarse alignment, followed by point-to-plane ICP refinement. These methods rely heavily on the quality of the feature descriptors used.

With the rise of deep learning, numerous AI-driven registration techniques have emerged. PointNet~\cite{qi2017pointnet} was the first neural network to directly process unordered point clouds. While widely used in classification, detection, and segmentation tasks, it also influences registration. PointNet captures global shape features but lacks local geometric context. This issue is addressed by PointNet++~\cite{qi2017pointnet++}, which aggregates local features using K-nearest neighbor clusters. Another variant, PointNetLK~\cite{aoki2019pointnetlk}, integrates the Lucas–Kanade algorithm into PointNet’s learned feature space for alignment. More advanced models like DGCNN (Dynamic Graph CNN)~\cite{wang2019dgcnn} dynamically construct graphs in both spatial and feature space to extract richer representations.

Deep Closest Point (DCP)~\cite{wang2019deep} builds upon PointNet by extracting meaningful point-wise feature descriptors and uses attention mechanisms to establish soft correspondences between source and target. Once correspondences are formed, it estimates the transformation using a differentiable SVD layer. While DCP~\cite{wang2019deep} is robust to noise and outliers, it often requires supervised data and pre-trained models.

A related approach, RPM-Net (Robust Point Matching using Learned Features)~\cite{yew2020rpm}, also employs soft correspondence matching through a differentiable Sinkhorn layer with annealing. This improves resilience to noise and partial visibility. It further enhances performance by using a secondary network to predict the optimal annealing parameters during registration.

Despite the variety of available registration algorithms, point cloud alignment can still fail, particularly under conditions of large rotational transformations between source and target point sets. In light of this challenge, we propose a novel variant of the ICP algorithm that performs weighted matching using our proposed similarity matrix. We compare our method to both classical numerical approaches and deep learning-based techniques. Through experimental results, we demonstrate how our approach can effectively resolve these registration challenges.

The remainder of this paper is structured as follows: Section~\ref{Section_2} introduces the background of ICP using SVD and its associated problems. Section~\ref{Section_3} presents our proposed method and its implementation. Section~\ref{Section_4} provides experimental results and evaluation. Section~\ref{Section_5} concludes the paper and outlines future directions.

\section{Background}
\label{Section_2} 
In this section, we discuss the basic ICP problem and highlight the key issues associated with it. As mentioned earlier, for rigid body alignment, we have two 3D point sets: one model (\textbf{\textit{Source}}) and one template (\textbf{\textit{Target}}). We denote the \textbf{\textit{Source}} as $\mathbf{S} = \{ \mathbf{s}_1, \mathbf{s}_2, \mathbf{s}_3, \dots, \mathbf{s}_N \} \subset \mathbb{R}^3$, and the \textbf{\textit{Target}} as $\mathbf{T} = \{ \mathbf{t}_1, \mathbf{t}_2, \mathbf{t}_3, \dots, \mathbf{t}_N \} \subset \mathbb{R}^3$, separated by a random rigid transformation consisting of a rotation matrix $\mathbf{R} \in \mathrm{SO}(3)$ and a translation vector $\boldsymbol{\tau} \in \mathbb{R}^3$. For simplicity, we assume that the \textbf{\textit{Source}} and \textbf{\textit{Target}} point sets contain the same number of 3D points. The vectors $\mathbf{s}_i$ and $\mathbf{t}_i$ represent the 3D coordinates of the Source and Target point clouds, respectively, and are indexed by $i = 1, 2, \dots, N$. For the remainder of this paper, we denote matrices by bold capital letters, vectors by bold lowercase letters, and scalars by regular (non-bold) fonts.

For the rigid body alignment problem, we intend to minimize the mean-squared error $E(\mathbf{R,\boldsymbol{\tau}})$, which can be written as:
\begin{equation}
     E(\mathbf{R},\boldsymbol{\tau}) = \frac{1}{N}\sum_{i}^{N}\|\mathbf{R}\mathbf{s}_{i} + \boldsymbol{\tau} - \mathbf{t}_{i} \|^2.
    \label{ErrorMinimization}
\end{equation} 

To achieve this minimization, we first compute the centroid of both the source and the target point sets as:
\begin{equation}
    \bar{\mathbf{s}} = \frac{1}{N} \sum_{i=1}^{N} \mathbf{s}_i \quad \text{and} \quad \bar{\mathbf{t}} = \frac{1}{N} \sum_{i=1}^{N} \mathbf{t}_i.
    \label{CentroidComputation}
\end{equation}
Then we compute the cross-covariance matrix $\mathbf{H}$ given as:
\begin{equation}
    \mathbf{H} = \sum_{i=1}^{N}(\mathbf{s}_i-\bar{\mathbf{s}})(\mathbf{t}_i-\bar{\mathbf{t}}).
    \label{CrossCovariance_Original}
\end{equation}
Performing SVD on this covariance matrix $\mathbf{H}$ can yield the rotation component. So, if $\mathbf{H} = \mathbf{UDV}^{T}$, then the rotation and the translation component are given by,
\begin{equation}
    \mathbf{R} = \mathbf{VU}^T \quad \text{and} \ \ \ \ \boldsymbol{\tau} = -\mathbf{R}\bar{\mathbf{s}} + \bar{\mathbf{t}}.
    \label{Randtcomputation_Original}
\end{equation}
It can be interpreted that $\mathbf{V},\mathbf{U} \in \mathrm{SO}(3)$. $\mathbf{D}$ is diagonal matrix. Here, the Procrustes problem assumes that all point sets are matched from Source to Target point sets, that is, $\mathbf{s}_i$ to $\mathbf{t}_i$ after performing the final alignment for all $i$. If the correspondences are unknown, then the objective function from Equ. \ref{ErrorMinimization} can be rewritten as:
\begin{equation}
     E(\mathbf{R},\boldsymbol{\tau}) = \frac{1}{N}\sum_{i}^{N}\|\mathbf{R}\mathbf{s}_{i} + \boldsymbol{\tau} - \mathbf{t}_{m(s_i)} \|^2,
    \label{ErrorMinimization_2}
\end{equation} 
where $m$ denotes the mapping from each point in $\mathbf{S}$ to its corresponding points in $\mathbf{T}$. However, it can be easily seen that if we know the optimal rigid transformation, then the mapping $m$ can be recovered, and conversely, given an optimal mapping $m$ transformation can be computed using Equ. \ref{Randtcomputation_Original}. It's a well known \textit{chicken-egg} problem. The algorithm terminates when a certain criterion or threshold is reached, however, is extremely prone to local optima. If the initial alignment is far, it may get stuck, yielding a poor estimate $m$. We try to address this problem in a fairly intuitive way, where we pick only a few points based on a given criterion in each iteration, and we see how it can achieve global registration as the iteration increases.

\section{Weighted Cross-Covariance Matrix}
\label{Section_3} 
As it is clear from the standard ICP framework that the cross-covariance matrix, referenced in Equation~\ref{CrossCovariance_Original} - is computed using pairs of points in the source and target point sets after centering both with respect to their respective centroids. This conventional approach assumes a hard correspondence: each source point is matched to a single closest target point, and all point pairs contribute equally to the alignment.

In contrast, our method introduces a significant generalization by computing a weighted cross-covariance matrix, which is iteratively refined as the algorithm progresses. Instead of assuming binary correspondences, we adopt a soft correspondence model that evaluates the similarity between each point in the source set and its paired point in the target set using a Gaussian kernel. This results in a similarity matrix that modulates the influence of each correspondence based on geometric proximity.

Specifically, we define a Gaussian kernel function as:
\begin{equation}
f(x) = \exp\left(-\frac{x^2}{2\sigma^2}\right),
\label{Gaussian_kernel}
\end{equation}
where $x$ is the Euclidean distance between a pair of corresponding points, and $\sigma > 0$ is a user-defined scale parameter that controls the sensitivity of the similarity measure. This kernel ensures that correspondences with smaller distances are weighted more heavily, while those with larger discrepancies are exponentially suppressed. As a result, outliers and poor matches contribute minimally to the alignment process, increasing the robustness of the algorithm in the presence of noise, occlusions, or non-rigid deformations.

This weighted formulation not only generalizes the classical ICP's assumption of equal-weighted correspondences but also provides a smooth and differentiable measure of alignment quality, which is particularly advantageous when dealing with ambiguous or noisy data. The weighted cross-covariance matrix thus encapsulates both geometric structure and probabilistic confidence in the pairwise alignments, leading to more accurate and stable transformation estimates.

From the Gaussian Kernel function in Equ. \ref{Gaussian_kernel}, we define our alignment problem as a weighted Procrustes problem, where the weights for each corresponding pair from the source to the target point sets are defined as:

\begin{equation}
w_i = \exp\left(-\frac{\| \mathbf{s}_i - \mathbf{t}_{c(i)} \|^2}{2\sigma^2}\right),
\label{weights}
\end{equation}
where $c(i)$ is the index of the corresponding points from each point in the source point set to the target point set. These correspondences, as mentioned, are the closest points based on Euclidean distance.  Later, we construct a symmetric similarity matrix \( \mathbf{M} \) of size \( N \times N \) such that:
\begin{equation}
\mathbf{M}(i, c(i)) = \mathbf{M}(c(i), i) = w_i.
\label{M_matrix_construction}    
\end{equation}

One may note that our similarity measurement matrix $\mathbf{M}$ is symmetric, i.e., $\mathbf{M} = \mathbf{M}^\top$, where $\top$ denotes the matrix transpose.
Based on the above definition, we compute the $\mathbf{H}$ matrix as:
\begin{equation}
    \mathbf{H} = \sum_{i=1}^{N}(\mathbf{s}_i-\bar{\mathbf{s}})\mathbf{M}(\mathbf{t}_i-\bar{\mathbf{t}}).
    \label{CrossCovariance_ESM}
\end{equation}
Performing SVD as equivalent to the traditional ICP on this covariance matrix $\mathbf{H}$ can yield the rotation component. So, if $\mathbf{H} = \mathbf{UDV}^{T}$ , then the rotation and the translation component is given by,
\begin{equation}
    \mathbf{R} = \mathbf{VU}^T \quad \text{and} \ \ \ \ \boldsymbol{\tau} = -\mathbf{R}\bar{\mathbf{s}} + \bar{\mathbf{t}}.
    \label{Randtcomputation_ESM}
\end{equation}
Based on the above formulation, we arrive at minimizing the function:
\begin{equation}
\min_{\mathbf{R} \in \mathrm{SO}(3)} \sum_{i=1}^{N} w_i \left\| \mathbf{R} \mathbf{s}_i' - \mathbf{t}_i' \right\|^2,
\end{equation}
and the weighted alignment error at iteration k is:
\begin{equation}
E_k = \sum_{i=1}^{N} w_i^{(k)} \left\| \mathbf{R}_k \mathbf{s}_i + \boldsymbol{\tau}_k - \mathbf{t}_i \right\|^2.
\end{equation}
Note that the similarity weights $w_i$ are bounded ($0<w_i<1$). After every iteration, the computed transformation $\mathbf{A}_k$ is the minimizer of $E_k$ so $E_{k+1} \leq E_k$. Here $\mathbf{A}_k$ can be written as :
\begin{equation}
\mathbf{A}_k =
\begin{bmatrix}
\mathbf{R}_k & \boldsymbol{\tau}_k \\
\mathbf{0}^\top & 1 
\end{bmatrix}
.
\end{equation}

\subsection{Convergence Analysis of ESM-ICP}

\begin{theorem}
Let $\mathbf{A}_k$ be the sequence of rigid transformations computed by ESM-ICP over iterations. If the weights $w_i$ are bounded and continuous (it's guaranteed since $0<w_i<1$), and the similarity matrix remains well-conditioned, then the sequence converges $\mathbf{A}_k$ to a fixed transformation $\mathbf{A}^*$.
\end{theorem}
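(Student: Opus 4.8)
The plan is to use the weighted alignment error $E_k$ as a Lyapunov-type functional for the iteration, prove its monotone convergence, and then upgrade that to convergence of the transformations $\mathbf{A}_k$ by combining compactness of $\mathrm{SE}(3)$ with the well-conditioning hypothesis on the similarity matrix. First I would record the monotonicity already isolated in the text: each iteration produces $\mathbf{A}_{k+1}$ as the exact minimiser over $\mathrm{SO}(3)\times\mathbb{R}^3$ of the weighted Procrustes cost built from the current correspondences and weights $\{w_i^{(k)}\}$ through Equ.~\eqref{CrossCovariance_ESM}--\eqref{Randtcomputation_ESM}, while (re)selecting the Euclidean-closest correspondences does not increase that cost either, so $E_{k+1}\le E_k$. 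Since every weight satisfies $0<w_i^{(k)}<1$ and every summand is a squared norm, $E_k\ge 0$; a bounded monotone sequence converges, so $E_k\to E^\star$ for some $E^\star\ge 0$ and, crucially, $E_k-E_{k+1}\to 0$.

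Next I would establish compactness of the iterates. We have $\mathbf{R}_k\in\mathrm{SO}(3)$, which is compact, and $\boldsymbol{\tau}_k=-\mathbf{R}_k\bar{\mathbf{s}}+\bar{\mathbf{t}}$ with $\bar{\mathbf{s}},\bar{\mathbf{t}}$ fixed finite vectors, hence $\|\boldsymbol{\tau}_k\|\le\|\bar{\mathbf{s}}\|+\|\bar{\mathbf{t}}\|$ for all $k$; thus $\{\mathbf{A}_k\}$ lies in a compact set $\mathcal{K}\subset\mathrm{SE}(3)$ and possesses at least one limit point. The heart of the argument is then to show $\|\mathbf{A}_{k+1}-\mathbf{A}_k\|\to 0$. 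Here I would invoke the well-conditioning assumption: with $\mathbf{M}$ well-conditioned and the weights bounded away from $0$, the singular values of the weighted cross-covariance $\mathbf{H}$ of Equ.~\eqref{CrossCovariance_ESM} are bounded below uniformly in $k$, which forces the weighted Procrustes objective at iteration $k$ to grow at least quadratically away from its unique minimiser $\mathbf{A}_{k+1}$. This gives a sufficient-decrease estimate of the form $c\,\|\mathbf{A}_{k+1}-\mathbf{A}_k\|^2\le E_k-E_{k+1}$ with $c>0$ independent of $k$, so $E_k-E_{k+1}\to 0$ implies $\|\mathbf{A}_{k+1}-\mathbf{A}_k\|\to 0$.

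I would then show every limit point is a fixed point of the ESM-ICP update: continuity of the kernel $f$ in Equ.~\eqref{Gaussian_kernel} and of the weight map $\mathbf{A}\mapsto\{w_i(\mathbf{A})\}$ makes the cross-covariance and the SVD-based update continuous in $\mathbf{A}$ (away from the measure-zero configurations where nearest neighbours tie), so passing to the limit along any convergent subsequence shows the limit $\mathbf{A}^\star$ satisfies the stationarity relations \eqref{CrossCovariance_ESM}--\eqref{Randtcomputation_ESM} with respect to its own induced weights. Finally, a discrete LaSalle-type argument closes the proof: $E_k$ is non-increasing and strictly decreases off the fixed-point set, and well-conditioning makes those fixed points isolated within $\mathcal{K}$; since the successive differences $\mathbf{A}_{k+1}-\mathbf{A}_k\to 0$ and $\{\mathbf{A}_k\}\subset\mathcal{K}$ is bounded, its set of limit points is connected, hence a single point $\mathbf{A}^\star$, so $\mathbf{A}_k\to\mathbf{A}^\star$.

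The main obstacle is exactly the passage from convergence of $E_k$ to convergence of the iterates: monotone decay of the energy alone does not pin down $\mathbf{A}_k$, since the transformations could a priori drift along an energy level set or oscillate between distinct equal-energy poses, and it is precisely the well-conditioning of $\mathbf{M}$ that converts the energy gap $E_k-E_{k+1}$ into the step bound $\|\mathbf{A}_{k+1}-\mathbf{A}_k\|^2$. A secondary technical point to handle with care is that the weights $w_i^{(k)}$ — and therefore the objective itself — change between iterations, so the Lyapunov argument must first control this drift of the functional through the continuity and boundedness of the Gaussian weighting before the level-set and isolation reasoning can be applied.
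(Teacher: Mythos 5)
Your overall strategy is the same as the paper's: treat the weighted alignment error $E_k$ as a descent functional, deduce $E_k \to E^\star$ from monotonicity and boundedness below, and then upgrade to convergence of the transformations. Where you go beyond the paper is in recognizing that the second step is the actual content of the theorem: the paper's proof stops at an incomplete assertion about $\left\|\mathbf{A}_{k+1}-\mathbf{A}_k\right\|$, whereas you supply the missing machinery --- compactness of the iterates in $\mathrm{SE}(3)$, a sufficient-decrease inequality $c\left\|\mathbf{A}_{k+1}-\mathbf{A}_k\right\|^2 \le E_k - E_{k+1}$ extracted from the well-conditioning hypothesis, continuity of the update map, and a discrete LaSalle / connected-limit-set argument. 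That portion of your plan is sound and is exactly what a rigorous version of the paper's proof would require.

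However, there is a genuine gap at the very first step, which you flag as a ``secondary technical point'' but never close, and which the paper's proof shares: the monotonicity $E_{k+1}\le E_k$ is not actually established. The standard ICP descent argument gives $\sum_i w_i^{(k)}\, d_i(\mathbf{A}_{k+1})^2 \le \sum_i w_i^{(k)}\, d_i(\mathbf{A}_k)^2$ with the weights frozen, and re-selecting nearest neighbours preserves this; but $E_{k+1}$ is evaluated with the updated weights $w_i^{(k+1)} = \exp\bigl(-(d_i^{(k+1)})^2/2\sigma^2\bigr)$. The per-point contribution $g(d) = e^{-d^2/(2\sigma^2)}\,d^2$ is not monotone in $d$: it increases on $[0,\sqrt{2}\sigma]$ and decreases thereafter, so when a large residual shrinks toward $\sqrt{2}\sigma$ its weight grows faster than its squared distance shrinks, and its weighted contribution --- hence possibly $E_{k+1}$ itself --- can exceed the corresponding term of $E_k$. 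Without either (i) a proof that the re-weighting step also decreases the energy, e.g.\ by exhibiting a single joint functional of which the transformation update and the weight update are alternating minimizations (as in majorize--minimize or IRLS analyses), or (ii) replacing $E_k$ by such a surrogate, the Lyapunov argument never gets off the ground, and everything downstream (the sufficient-decrease bound, the LaSalle step) rests on an unproven premise. This is the point where the proof --- yours and the paper's --- needs a new idea rather than more care.
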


\begin{proof}
At each iteration, the objective function minimized is:
\begin{equation}
E_k = \sum_{i=1}^{N} w_i^{(k)} \left\| \mathbf{R}_k \mathbf{s}_i + \boldsymbol{\tau}_k - \mathbf{t}_i \right\|^2.
\end{equation}

This is a weighted least-squares error where weights $w_i^{(k)} \in (0,1]$  are derived from the Gaussian kernel. As the transformation is refined, the distances $\left\| \mathbf{s}_i - \mathbf{t}_{c(i)} \right\|$ decrease, stabilizing the weights. The updated transformation $\mathbf{A}_{k+1}$ minimizes , leading to:
\begin{equation}
E_{k+1} < E_k.
\end{equation}

The sequence $E_k$ is thus monotonically decreasing and bounded below, implying convergence. Furthermore, the transformation updates satisfy $\left\| \mathbf{A}_{k+1} - \mathbf{A}_{k} \right\|$, yielding convergence to a fixed point $\mathbf{A}^*$.
\end{proof}

\begin{corollary}
Let $\mathbf{M}$ be the similarity matrix used in ESM-ICP. After $k$ iterations, if the source and target point sets are sufficiently close such that the computed weights satisfy $w_i \to 1$, the ESM-ICP formulation effectively reduces to the standard ICP algorithm.
\end{corollary}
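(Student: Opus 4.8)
The plan is to show that the quantities defining one ESM-ICP iteration — the weighted cross-covariance matrix $\mathbf{H}$, and hence the rotation $\mathbf{R}$ and translation $\boldsymbol{\tau}$ extracted from it — converge to their standard-ICP counterparts from Equations~\ref{CrossCovariance_Original}--\ref{Randtcomputation_Original} as $w_i \to 1$ for every $i$. Concretely, I would first observe that in the limit $w_i = 1$ the similarity matrix $\mathbf{M}$ of Equation~\ref{M_matrix_construction} becomes the $0/1$ correspondence (permutation) matrix encoding the closest-point assignment $c(\cdot)$, which is exactly the hard correspondence that standard ICP uses; when the sets are close enough that $c(i) = i$, this is simply $\mathbf{M} = \mathbf{I}_N$. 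Substituting into Equation~\ref{CrossCovariance_ESM} — read, as intended, as the weighted sum $\mathbf{H} = \sum_i w_i (\mathbf{s}_i - \bar{\mathbf{s}})(\mathbf{t}_{c(i)} - \bar{\mathbf{t}})^\top$ — then collapses it term-by-term onto Equation~\ref{CrossCovariance_Original}. Since the sum is finite, this convergence is immediate and uniform in the data.

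Next I would push this through the SVD step. Writing $\mathbf{H}_w = \sum_i w_i(\mathbf{s}_i-\bar{\mathbf{s}})(\mathbf{t}_{c(i)}-\bar{\mathbf{t}})^\top$ and $\mathbf{H} = \sum_i (\mathbf{s}_i-\bar{\mathbf{s}})(\mathbf{t}_i-\bar{\mathbf{t}})^\top$, one has $\|\mathbf{H}_w - \mathbf{H}\| \le \sum_i |1-w_i|\,\|\mathbf{s}_i-\bar{\mathbf{s}}\|\,\|\mathbf{t}_{c(i)}-\bar{\mathbf{t}}\| \to 0$. The map $\mathbf{H} \mapsto \mathbf{V}\mathbf{U}^\top$ (the orthogonal Procrustes solution of Equation~\ref{Randtcomputation_ESM}) is continuous at any $\mathbf{H}$ whose singular values are simple, which is precisely the ``well-conditioned similarity matrix'' hypothesis inherited from the theorem; hence $\mathbf{R}_w \to \mathbf{R}$, and since $\boldsymbol{\tau} = -\mathbf{R}\bar{\mathbf{s}} + \bar{\mathbf{t}}$ uses the same unweighted centroids in both formulations, $\boldsymbol{\tau}_w \to \boldsymbol{\tau}$. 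Equivalently, at the objective level, the weighted error $\sum_i w_i\|\mathbf{R}\mathbf{s}_i + \boldsymbol{\tau} - \mathbf{t}_i\|^2$ converges uniformly on any compact set $\mathrm{SO}(3) \times K$ to $N\,E(\mathbf{R},\boldsymbol{\tau})$ of Equation~\ref{ErrorMinimization}, so the minimizers coincide in the limit.

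Finally I would assemble these facts: the per-iteration update $\mathbf{A}_k$ of ESM-ICP is a continuous function of $\mathbf{H}_w$, which tends to the standard-ICP update as the weights saturate to $1$, and the closest-point assignment used to build $\mathbf{M}$ is the same assignment standard ICP computes, so the two algorithms perform the same step and ESM-ICP reduces to ICP. The main obstacle is the $\arg\min$ continuity invoked in the second paragraph — it is not automatic that minimizers of a perturbed objective stay near minimizers of the limit objective. I expect to discharge it exactly via the well-conditioning hypothesis (simple singular values of $\mathbf{H}$ make the Procrustes rotation locally Lipschitz in $\mathbf{H}$), while being explicit about the mild non-degeneracy this imposes on the configuration: the centered source and target clouds must not be rank-deficient or arranged so as to produce repeated singular values. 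A secondary, more cosmetic point is reconciling the dimension mismatch in Equation~\ref{CrossCovariance_ESM} by fixing the intended weighted-sum reading at the outset.
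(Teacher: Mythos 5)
Your proposal is correct and follows essentially the same route as the paper: as $w_i \to 1$ the similarity matrix $\mathbf{M}$ of Equation~\ref{M_matrix_construction} tends to the identity (more carefully, to the $0/1$ correspondence permutation), and substituting $\mathbf{M}=\mathbf{I}$ into Equation~\ref{CrossCovariance_ESM} recovers the unweighted cross-covariance of Equation~\ref{CrossCovariance_Original} and hence the standard ICP update of Equation~\ref{Randtcomputation_Original}. The paper's own proof stops at that substitution, so your extra steps --- the perturbation bound $\|\mathbf{H}_w-\mathbf{H}\|\to 0$, the continuity of the Procrustes map at matrices with simple singular values, and the explicit weighted-sum reading that repairs the dimension mismatch in Equation~\ref{CrossCovariance_ESM} --- are sound refinements rather than a different argument.
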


\begin{proof}
Suppose that after $k$ iterations, the weights $w_i$ approach $1$ for all points, as governed by the exponential weighting function. This implies the emergence of near-perfect one-to-one correspondences between the source and target point sets. Consequently, according to Equ.~\ref{M_matrix_construction}, the similarity matrix $\mathbf{M}$ asymptotically becomes a diagonal matrix with $w_i \to 1$, effectively reducing $\mathbf{M}$ to the identity matrix.

Substituting $\mathbf{M} = \mathbf{I}$ into Equ.~\ref{CrossCovariance_ESM} yields the original cross-covariance formulation in Equ.~\ref{CrossCovariance_Original}, which corresponds to the standard ICP algorithm.
\end{proof}

Our proposed approach is algorithmically described in Algorithm~\ref{CoSM_algo}.
\algrenewcommand\alglinenumber[1]{\footnotesize#1:}
\begin{algorithm}
\small  
\caption{ESM-ICP Algorithm}
\label{CoSM_algo}
\begin{algorithmic}[1]
\Function{\textbf{ReadDataSets}}{$\mathbf{S}, \mathbf{T}$}
\While {not converged}
    \State $\mathbf{c} \gets \textbf{ComputeCorrespondence}(\mathbf{S}, \mathbf{T})$
    \State $\mathbf{M} \gets \text{zeros}(N, N)$.
    \For{$i = 1$ to $N$}
        \State $\mathbf{d} \gets \| \mathbf{s}_i - \mathbf{t}_{c(i)} \|$.
        \State $w_i \gets \exp\left(-\frac{ \| \mathbf{s}_i - \mathbf{t}_{c(i)} \|^2 }{2\sigma^2} \right)$.
        \State $\mathbf{M}(i, c(i)) = w_i$, \quad $\mathbf{M}(c(i), i) = w_i$.
    \EndFor
    \State $\bar{\mathbf{s}} \gets \frac{1}{N} \sum \mathbf{s}_i$, \quad $\bar{\mathbf{t}} \gets \frac{1}{N} \sum \mathbf{t}_i$.
    \State $\mathbf{H} \gets \sum (\mathbf{s}_i - \bar{\mathbf{s}})\mathbf{M}(\mathbf{t}_i - \bar{\mathbf{t}})$.
    \State SVD: $\mathbf{H} = \mathbf{UDV}^\top$.
    \State $\mathbf{R} \gets \mathbf{V} \mathbf{U}^\top$.
    \State $\boldsymbol{\tau} \gets -\mathbf{R} \bar{\mathbf{s}} + \bar{\mathbf{t}}$.
\EndWhile
\EndFunction
\end{algorithmic}
\end{algorithm}

\section{Experiments}\label{Experiments}
\label{Section_4} 
The effectiveness of the proposed ESM-ICP algorithm is assessed using two widely recognized benchmark datasets: the Stanford Bunny \cite{datasetsource1} and ModelNet40 \cite{wu20153d}. While both datasets are utilized, the majority of experiments are conducted on ModelNet40 due to its diversity and scale. Within our experimental pipeline, point cloud data is loaded in \textit{pcd} format. A random rigid transformation is applied to one of the point clouds, which is subsequently designated as the \textit{Source}, while the unaltered point cloud serves as the \textit{Target}. The objective is to accurately estimate the transformation that aligns the Source to the Target using our ESM-ICP method and to quantitatively evaluate the quality of this alignment.
We benchmark the performance of ESM-ICP against several baseline and state-of-the-art registration algorithms, including traditional geometry-based methods such as standard ICP~\cite{10.1109/TPAMI.1987.4767965,121791}, ICP Point-to-Plane~\cite{articleptp}, Go-ICP~\cite{yang2015goicp}, and ICP Non-Linear~\cite{granger2002em,zhou2015affine}, as well as learning-based methods like Deep Closest Point (DCP)~\cite{wang2019deep}, PointNetLK~\cite{aoki2019pointnetlk}, RpmNet~\cite{confcvprYewL20}, and DeepGMR~\cite{jian2005robust}. To assess robustness, we also introduce outliers into the Source point cloud and analyze the algorithm's performance under such perturbations.
Evaluation is primarily conducted using three error metrics—Root Mean Square Error (RMSE), Mean Absolute Error (MAE), and Mean Square Error (MSE)—calculated between the predicted and ground-truth transformations. Successful rigid alignment is characterized by RMSE values approaching zero. All experiments are performed on a system equipped with an Intel i7 CPU (2.70 GHz) and 32 GB of RAM. For deep learning-based approaches, we leverage an NVIDIA GTX 1070 GPU with 8 GB of VRAM.

To explore the internal behavior of our algorithm, we initially focus on the Stanford Bunny dataset. In this setting, we analyze the evolution of the similarity matrix $\mathbf{M}$ over successive iterations to illustrate how $\mathbf{M}$ progressively approaches an identity matrix as the algorithm converges.
\subsection{Evolution of $\mathbf{M}$}
As an initial evaluation, we utilize the well-known Stanford Bunny dataset to analyze the behavior of our algorithm in a controlled setting. To simplify visualization and enhance interpretability, we downsample the point cloud using a voxel grid filter with a leaf size of 0.01, implemented via the Point Cloud Library (PCL v1.15)~\cite{rusu20113d}. The reduced number of points allows for clearer observation of the alignment process between the source and target point sets.

A random rigid transformation with a large rotational component is applied to the source point cloud. Specifically, the rotation is defined by Euler angles $(r, p, y) = (0, 0, 4.53)$, while the translation vector is sampled uniformly in the range $[-0.1, 0.1]$. It is important to note that, due to the centroid-based nature of our transformation computation, the absolute translation offset has minimal influence on convergence. The focus remains on recovering accurate rotational alignment despite significant initial misalignment. Figure~\ref{fig:test1_bun_downsampled}(a) depicts the initial separation between the source and target point sets, while Figures~\ref{fig:test1_bun_downsampled}(b) and (c) demonstrate the convergence of the algorithm over 14 and 15 iterations, respectively.

For all experiments, we range the value of $\sigma = {0.05,1.0}$ in our exponential similarity weighting function depending on the point set density. For example, if the point set is dense (captured from a high fidelity sensor), we can set the value of sigma to be small, and large otherwise. This parameter governs the influence of point-to-point distances on the similarity matrix and is crucial in moderating the robustness of the algorithm to noise and outliers.
\begin{figure}[!htb]
\centering
\subfloat[]{\includegraphics[width=0.15\textwidth]{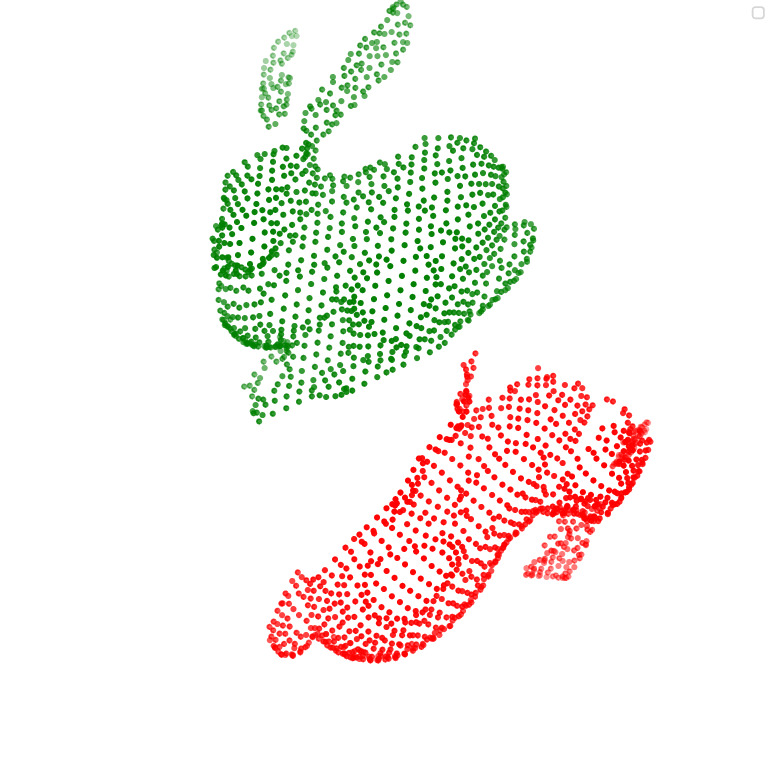}} \hfil
\subfloat[]{\includegraphics[width=0.15\textwidth]{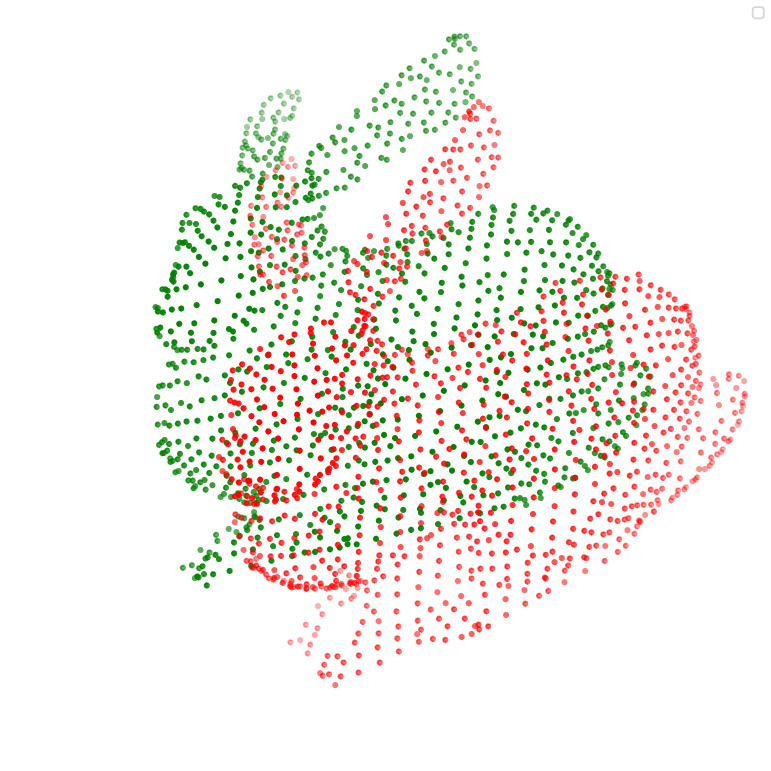}} \hfill
\subfloat[]{\includegraphics[width=0.15\textwidth]{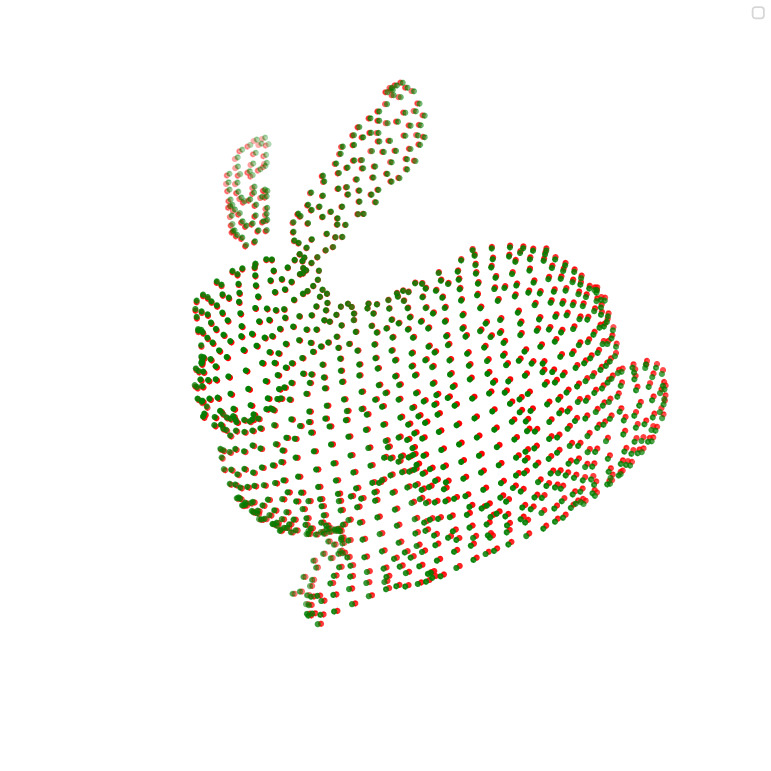}} \hfil
  \caption{(a) shows the 2 bunny rabbit point cloud (Red: Source and Green: Target) down-sampled using voxel grid filtering of leaf size $0.01$. The transformation between them is $(r,p,y)=(0,0,4.53)$ and $({\tau}_x,{\tau}_y,{\tau}_z) = (0,0,0.3)$  (b) shows the alignment after 14 iterations. (c) shows the alignment after 15 iterations.}
  \label{fig:test1_bun_downsampled}
\vspace{-0pt}
\end{figure}
To analyze the internal mechanics of our approach, particularly the evolution of the similarity matrix $\mathbf{M}$, we track its values across iterations. As outlined earlier, $\mathbf{M}$ is reinitialized to a zero matrix at each iteration, and its entries are updated based on the exponential weighting of current point correspondences. Figure~\ref{fig:Heatmap} visualizes how the structure of $\mathbf{M}$ evolves, illustrating a gradual increase in one-to-one correspondences as the alignment progresses.

To make this matrix more interpretable, we conduct the same experiment using a heavily downsampled version of the Bunny dataset with a voxel leaf size of 0.06. This results in just 21 points, generating a $21 \times 21$ similarity matrix that is easier to visualize and analyze. From the heatmap in Fig.~\ref{fig:Heatmap}, it is evident that only correspondences with substantial weights (highlighted in yellow) significantly contribute to the transformation computation through matrix $\mathbf{H}$. The remaining elements (shown in blue) have near-zero values and are effectively ignored.

As iterations proceed, the structure of $\mathbf{M}$ begins to resemble the identity matrix, indicating a strong one-to-one correspondence between source and target points. Once convergence is close enough, the algorithm transitions into a behavior similar to classical ICP~\cite{10.1109/TPAMI.1987.4767965,121791}. Notably, the final RMSE achieved in this test was $0.000837512$ after 15 iterations, highlighting the high precision of the recovered alignment.
\begin{figure*}[!htb]
\centering
\subfloat[]{\includegraphics[width=0.9\textwidth,height=7cm]{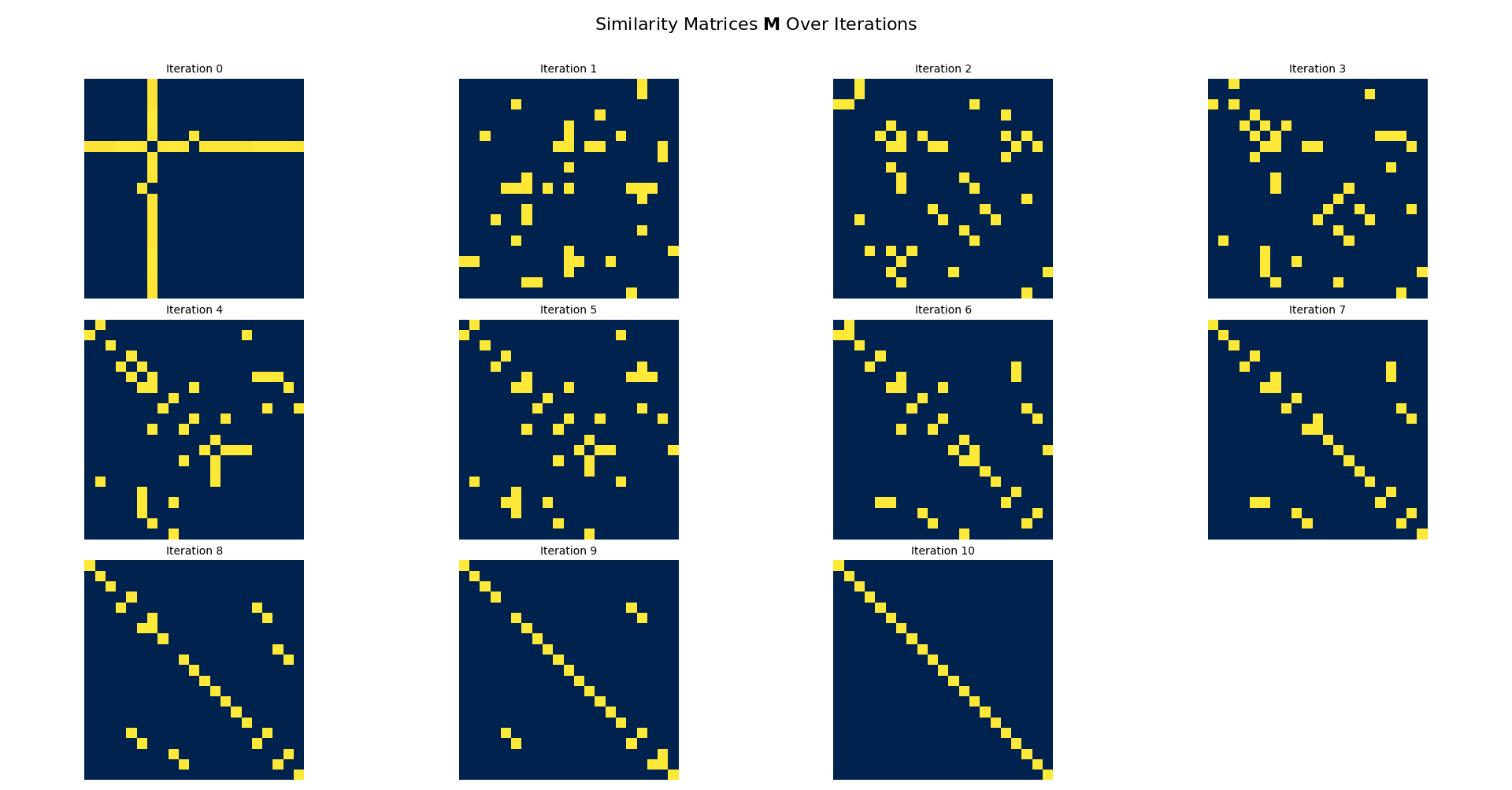}} \hfil
  \caption{(a) Evolution of the similarity matrix $\mathbf{M}$ ($21 \times 21$) across iterations. (Note: the bunny rabbit point cloud dataset was sampled with a leaf size of $0.06$, reducing the matrix size for better visualization purposes.) Yellow entries denote high-weight correspondences; blue entries indicate near-zero weights.}
  \label{fig:Heatmap}
\vspace{-0pt}
\end{figure*}

\subsection{Comparison to other methods}
The comparative analysis is extended to include a range of geometric registration techniques, such as standard Iterative Closest Point (ICP~\cite{10.1109/TPAMI.1987.4767965,121791}), ICP Point-to-Plane~\cite{articleptp}, ICP Non-Linear (ICP-NL)~\cite{granger2002em,zhou2015affine}, and the globally optimal Go-ICP~\cite{yang2015goicp} algorithm. All of these methods are categorized as geometry-based alignment approaches. For evaluation, we use the widely adopted ModelNet40 dataset, applying random rigid transformations to simulate diverse registration challenges. The transformation parameters include rotation angles sampled uniformly from the range $[-\pi, \pi]$ and translations drawn from both $[-1.0, 1.0]$ and a wider interval of $[-100, 100]$ for more extreme tests.

Importantly, no additional pre-processing steps—such as voxel grid filtering, normal estimation, or downsampling—are applied to the ModelNet40 dataset. This decision was made to assess the robustness of the registration algorithms on raw, unfiltered point cloud data. Additionally, we omit runtime comparisons, as the computational performance of our proposed ESM-ICP algorithm is similar to standard ICP~\cite{10.1109/TPAMI.1987.4767965,121791, 10.1109/TPAMI.1987.4767965} and remains suitable for real-time applications.

Figure~\ref{fig:geometric_based_approaches_comparison} provides a visual comparison of ESM-ICP against the aforementioned geometric methods. While Go-ICP~\cite{yang2015goicp} is known for its robustness to large rotations and occasionally demonstrates strong performance, we observed failure cases in certain complex configurations. In contrast, our method consistently succeeds across a wide range of transformation scenarios. Specifically, we evaluated our approach on over 2,000 randomly sampled rotation combinations within $[-\pi, \pi]$ and found that ESM-ICP successfully aligned all test pairs within 100 iterations.

Traditional methods, including standard ICP~\cite{10.1109/TPAMI.1987.4767965,121791}, ICP-NL and ICP Point-to-Plane~\cite{articleptp}, frequently failed in our evaluation, particularly in scenarios involving large rotational misalignments. These failures align with known limitations of such techniques, which often struggle without a good initial alignment guess.

In parallel, we also compare ESM-ICP with several deep learning-based registration approaches. These include Deep Closest Point (DCP~\cite{wang2019deep}), PointNetLK~\cite{aoki2019pointnetlk}, RpmNet~\cite{confcvprYewL20}, and DeepGMR~\cite{jian2005robust}, all evaluated using their publicly available pre-trained models provided by the Learning3D library. Although these models were trained on the ModelNet40 dataset, they typically follow a setup where the testing set differs from the training set. For example, DCP~\cite{wang2019deep} applies random rigid transformations during training, drawing rotation angles from the interval $[0^\circ, 45^\circ]$ and translations from $[-0.5, 0.5]$. However, since our evaluation often involves rotations significantly exceeding $45^\circ$, these models—including RpmNet~\cite{confcvprYewL20} and DeepGMR~\cite{jian2005robust}—exhibited degraded performance and often failed to produce accurate alignments.

As shown in Fig. \ref{fig:ai_method_comparison}, ESM-ICP outperforms all evaluated deep learning models under a wide range of transformation conditions. Quantitative results are presented in Table~\ref{tab:modelnet40_airplane_results} for the Airplane class and in Table~\ref{tab:modelnet40_person_results} for the Person class. Both illustrate the superior accuracy of ESM-ICP.

In contrast to deep learning models, Go-ICP~\cite{yang2015goicp} shows comparatively better performance under large rotations, however, it too can fail in scenarios with complex and coupled axis rotations. Moreover, Go-ICP~\cite{yang2015goicp}, like most methods, struggles when the source point cloud is contaminated by noise.

To further assess robustness, we also evaluate registration performance in the presence of noise. Table~\ref{tab:modelnet40_bottle_outliers_results} and Table~\ref{tab:modelnet40_vase_outliers_results} show results when outliers are introduced in the source point cloud. Noise is injected post-transformation using a mixture of Gaussian distributions: $\mathcal{N}(0, 0.01)$ clipped to $[-0.05, 0.05]$, $\mathcal{N}(0, 0.04)$ clipped to $[-1.0, 1.0]$, and $\mathcal{N}(0, 0.1)$ clipped to $[-10, 10]$, effectively modeling non-Gaussian behavior. In these challenging scenarios, most traditional and learning-based methods fail to recover accurate transformations.

In contrast, our proposed ESM-ICP method demonstrates strong robustness against such distortions. This is attributed to our similarity matrix $\mathbf{M}$, which leverages an exponential Gaussian weighting strategy. This mechanism reduces the influence of outlier correspondences during optimization, thereby enhancing alignment accuracy. The consistent performance of ESM-ICP across both clean and noisy data strongly highlights its effectiveness for real-world registration applications.
\begin{figure*}[!htb]
\centering
\renewcommand{\arraystretch}{0.0} 
\setlength{\tabcolsep}{4pt}      
\begin{tabular}{c c c c c c}
\textbf{Input} & \textbf{ICP}~\cite{10.1109/TPAMI.1987.4767965,121791} & \textbf{ICP-NL}~\cite{granger2002em,zhou2015affine} & \textbf{GOICP}~\cite{yang2015goicp} & \textbf{ICP-P2PL}~\cite{articleptp} & \textbf{ESM-ICP} \\
\includegraphics[width=0.15\textwidth]{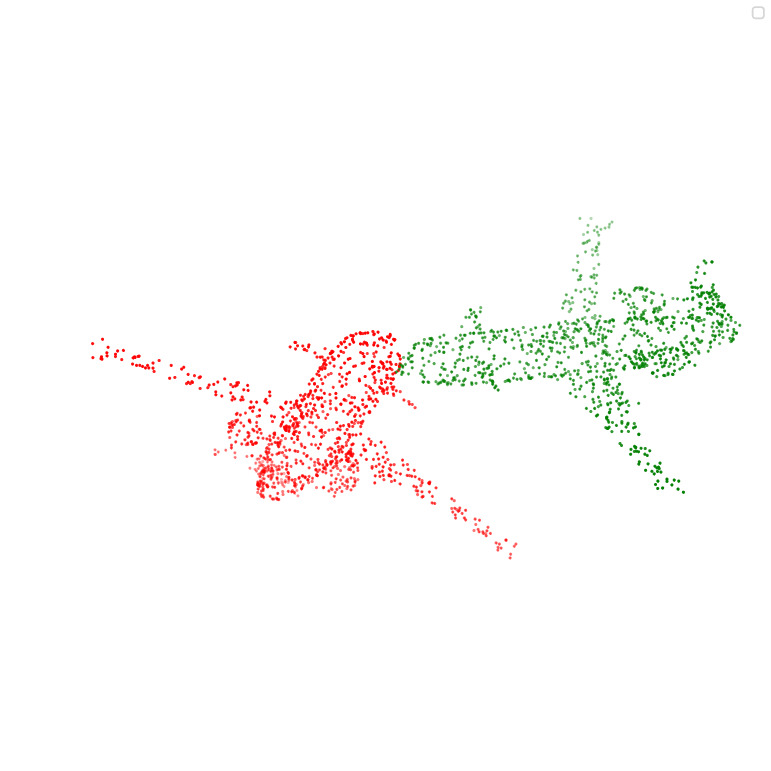} & 
\includegraphics[width=0.15\textwidth]{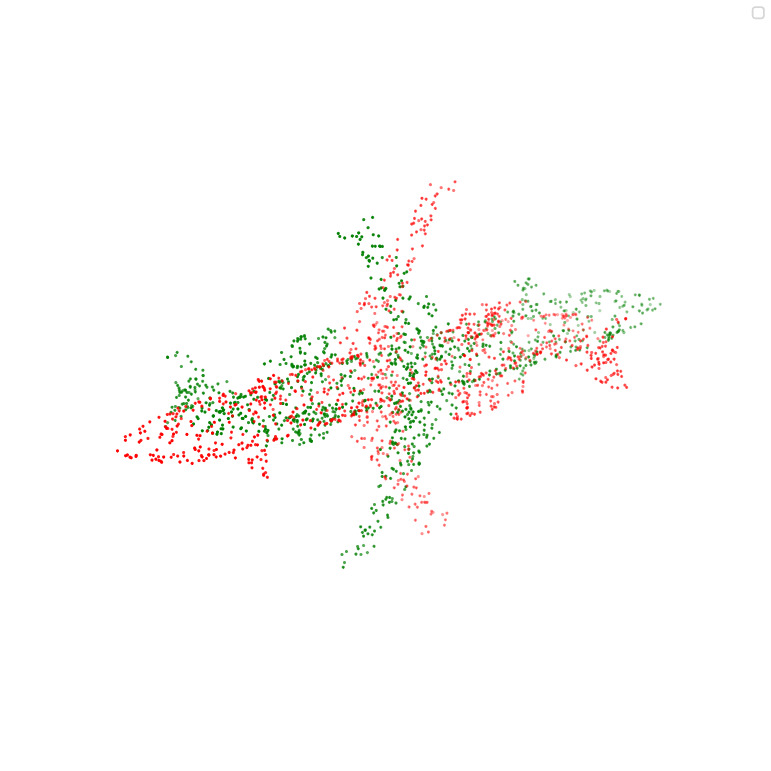} &
\includegraphics[width=0.15\textwidth]{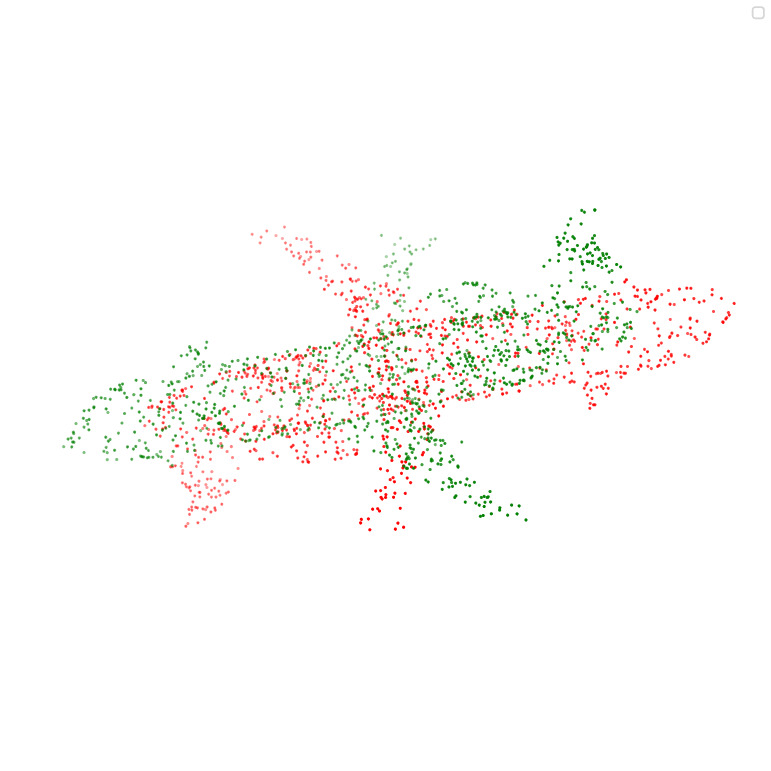} &
\includegraphics[width=0.15\textwidth]{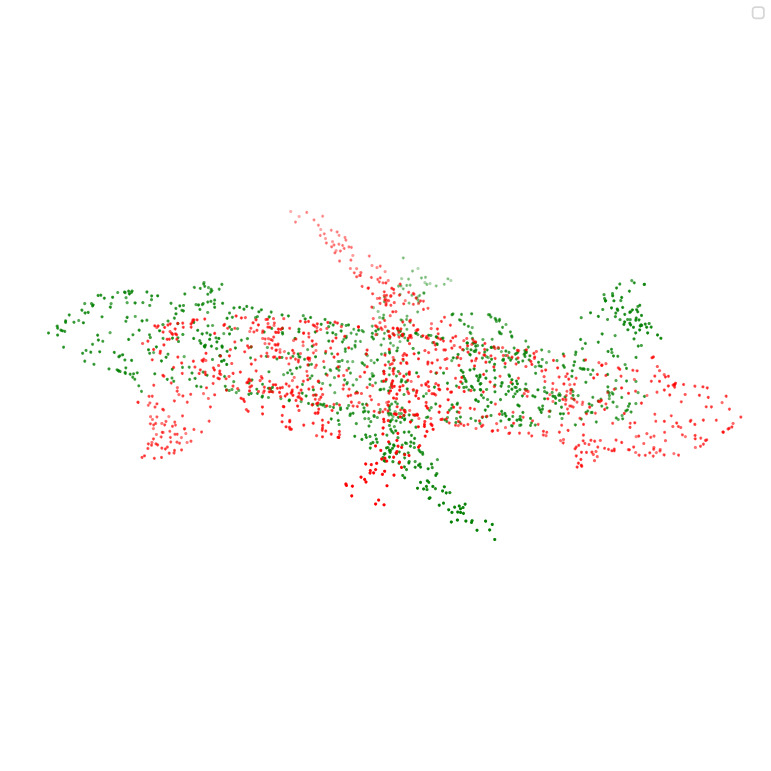} &
\includegraphics[width=0.15\textwidth]{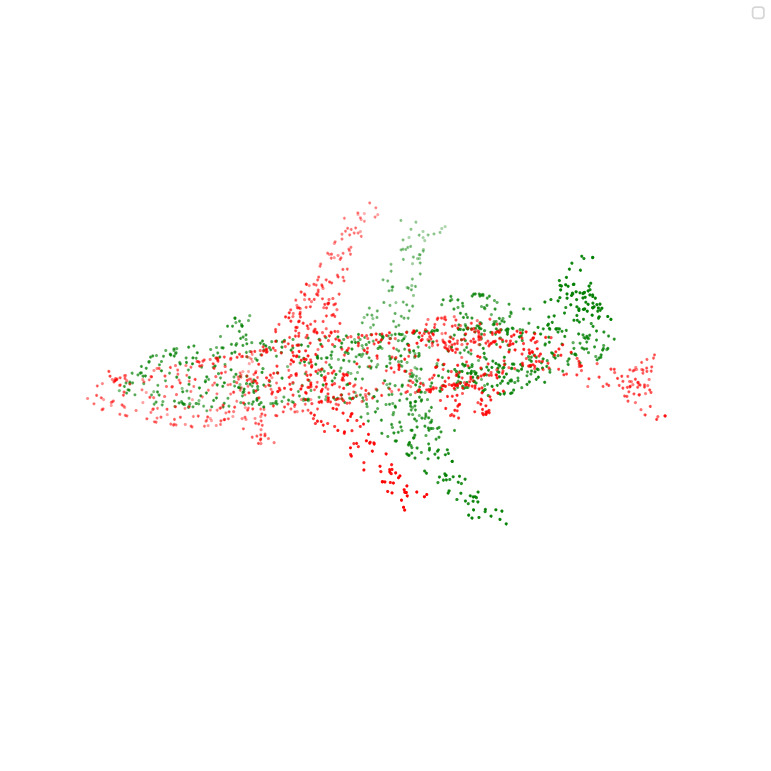} & 
\includegraphics[width=0.15\textwidth]{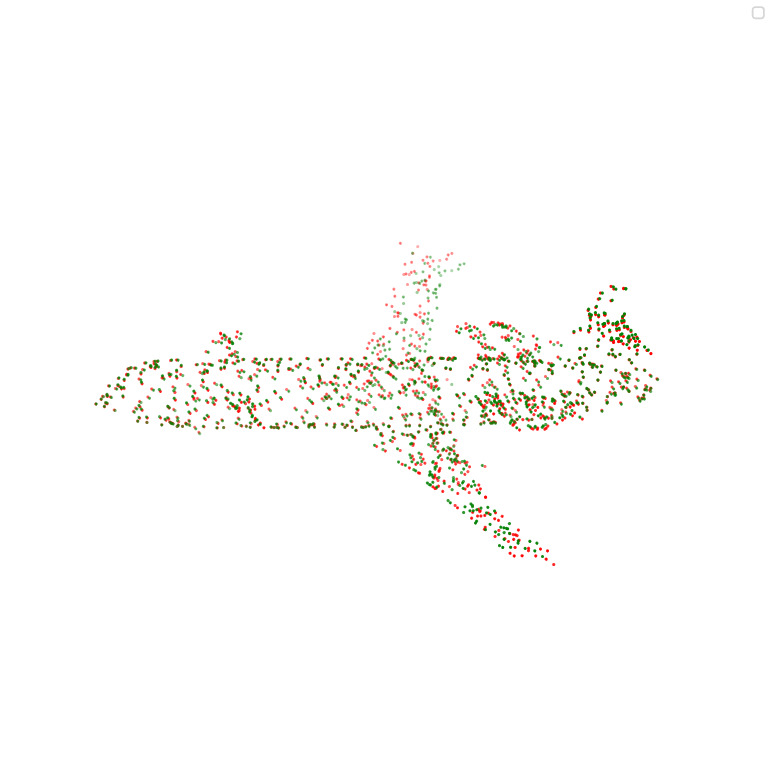} 
\\
\includegraphics[width=0.15\textwidth]{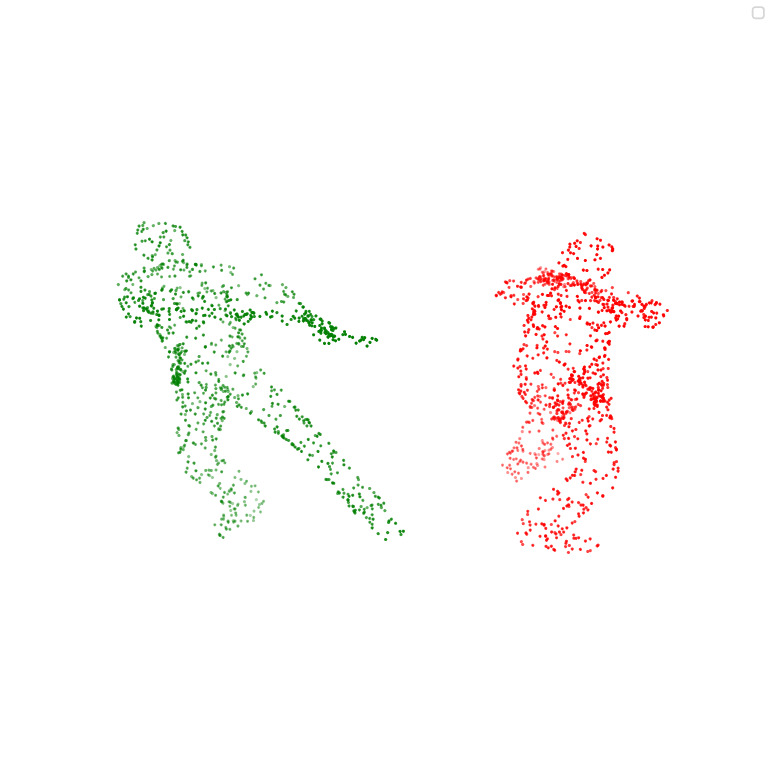} &
\includegraphics[width=0.15\textwidth]{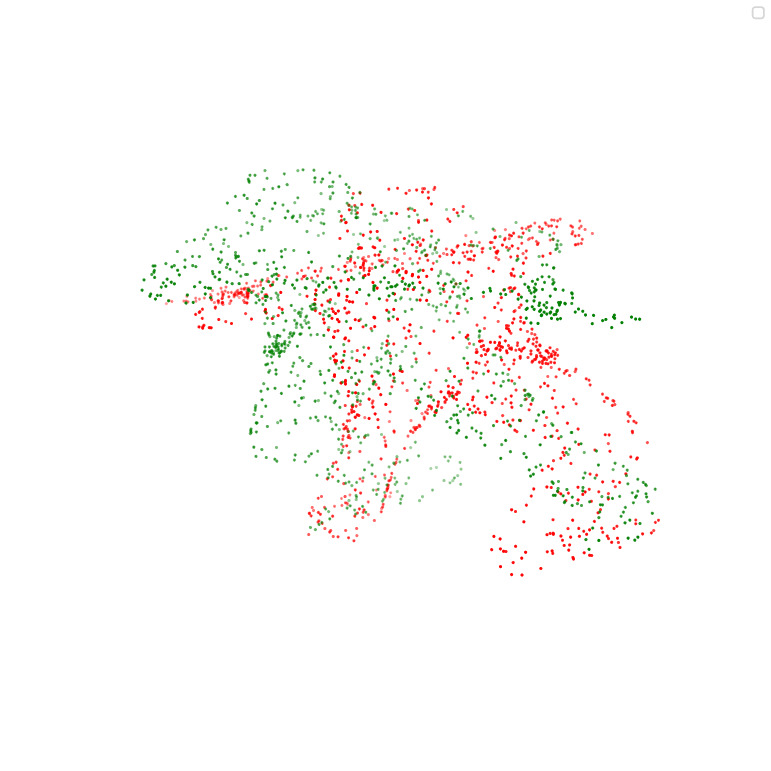} &
\includegraphics[width=0.15\textwidth]{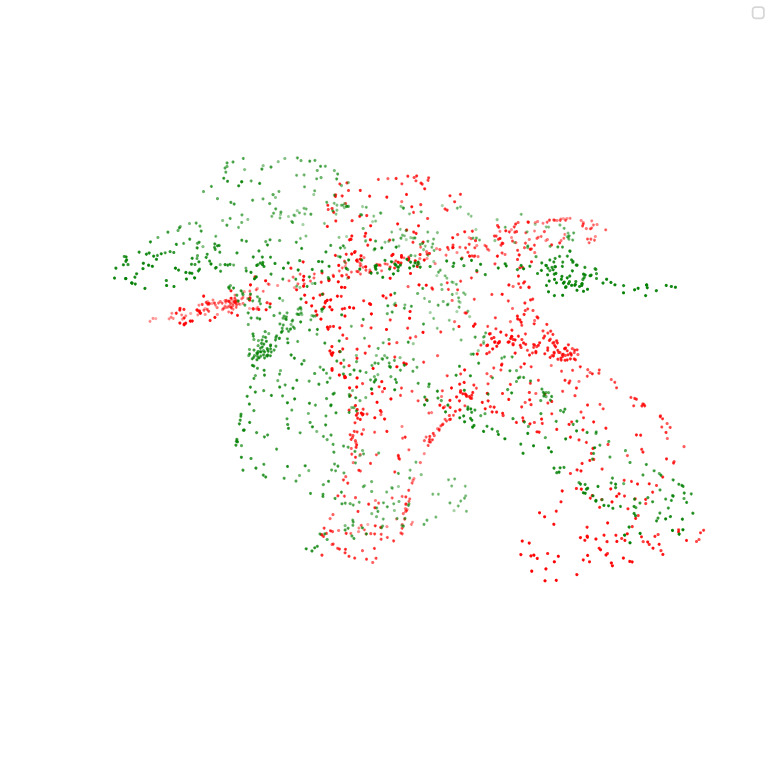} &
\includegraphics[width=0.15\textwidth]{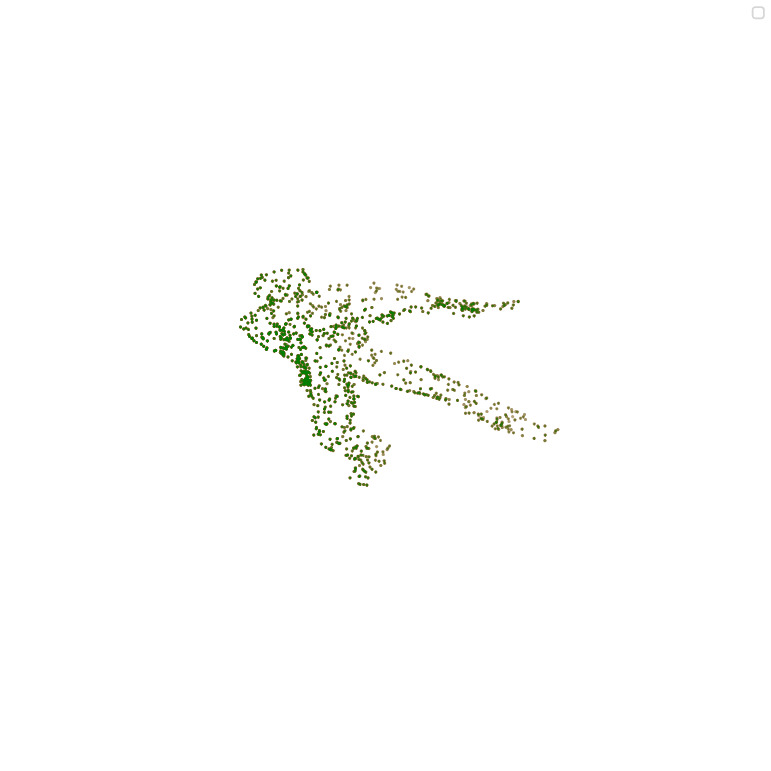} &
\includegraphics[width=0.15\textwidth]{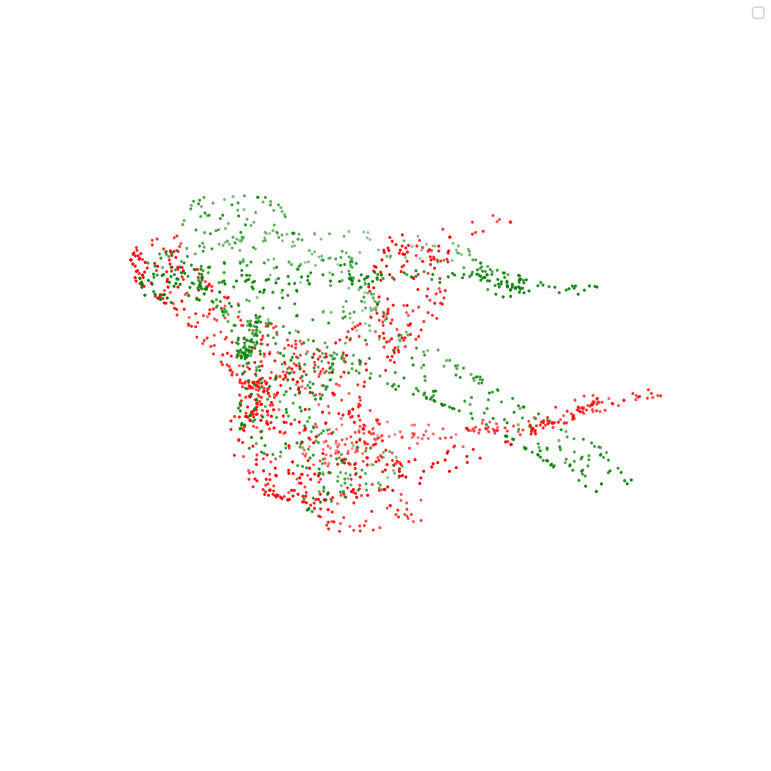} &
 \includegraphics[width=0.15\textwidth]{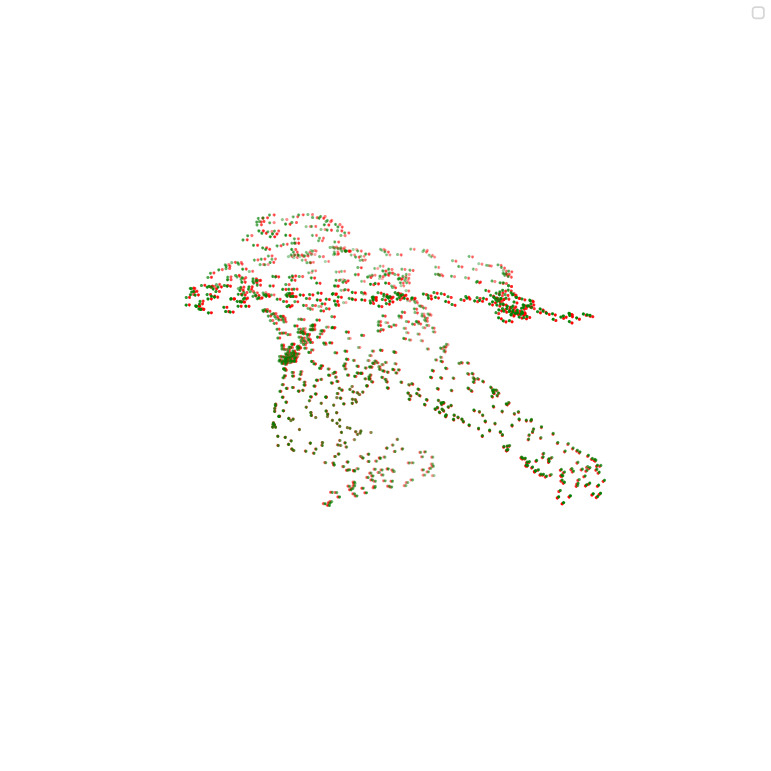} 
\\
\includegraphics[width=0.15\textwidth]{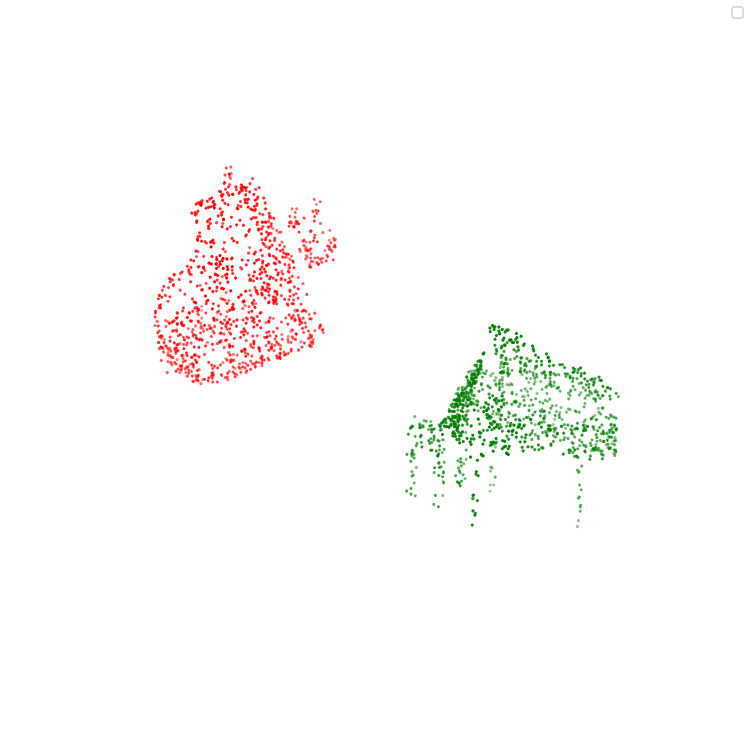} &
\includegraphics[width=0.15\textwidth]{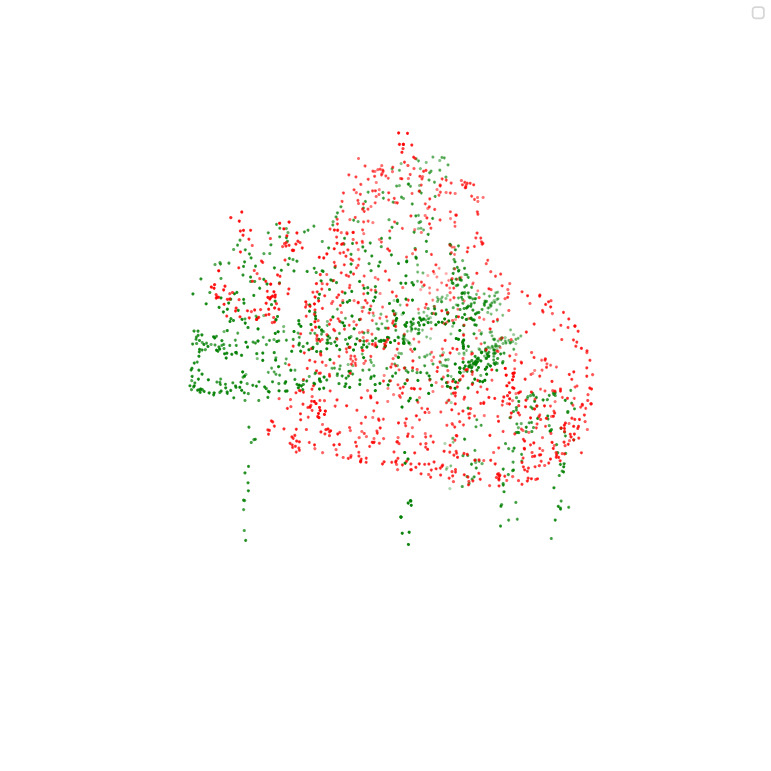} &
\includegraphics[width=0.15\textwidth]{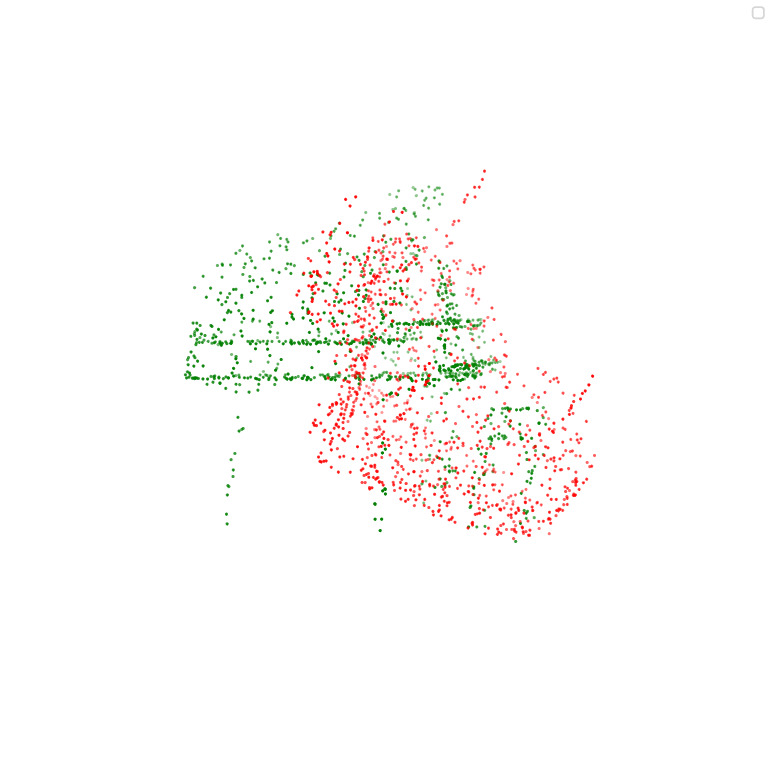} &
\includegraphics[width=0.15\textwidth]{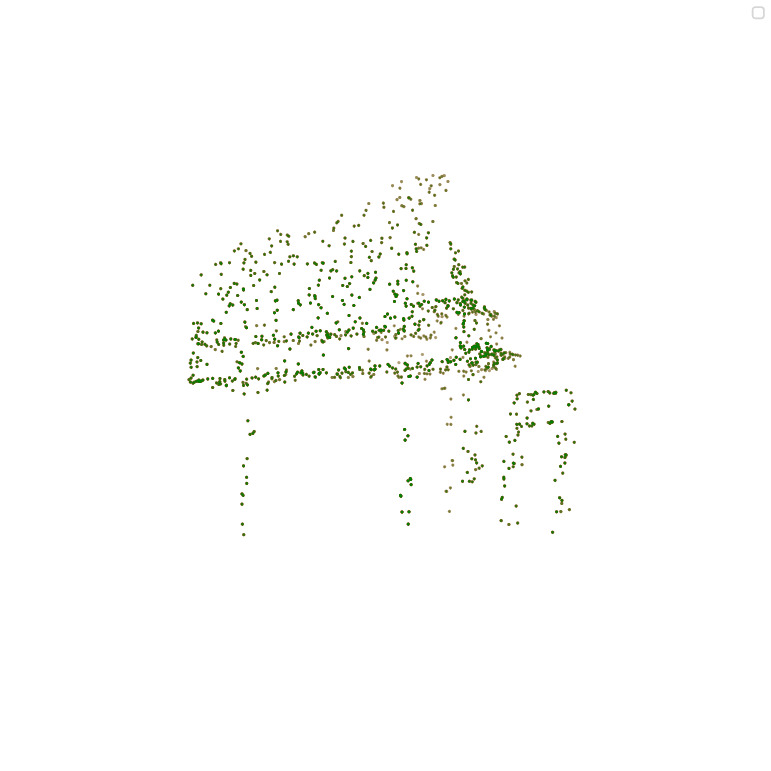} &
\includegraphics[width=0.15\textwidth]{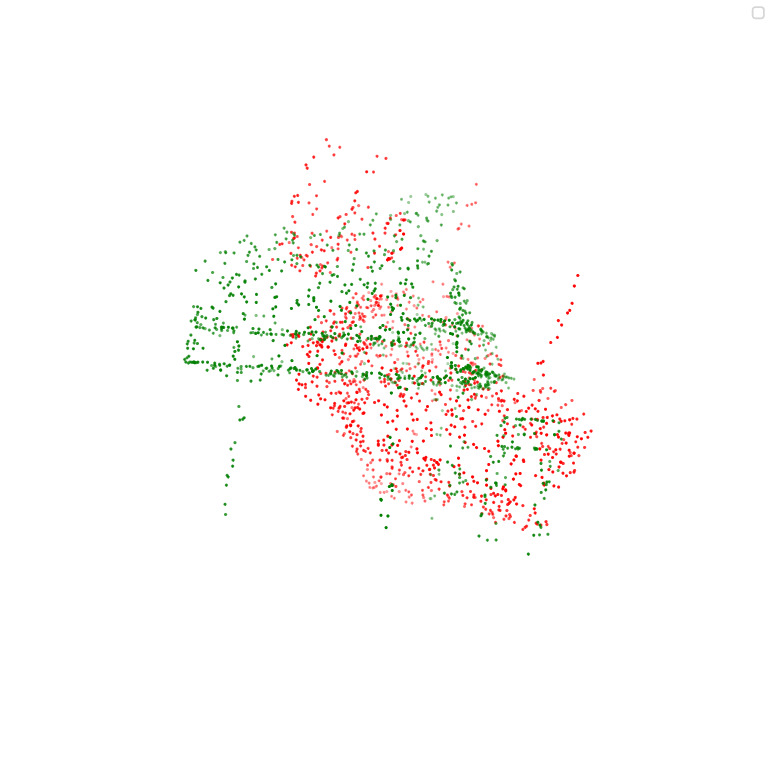} &
\includegraphics[width=0.15\textwidth]{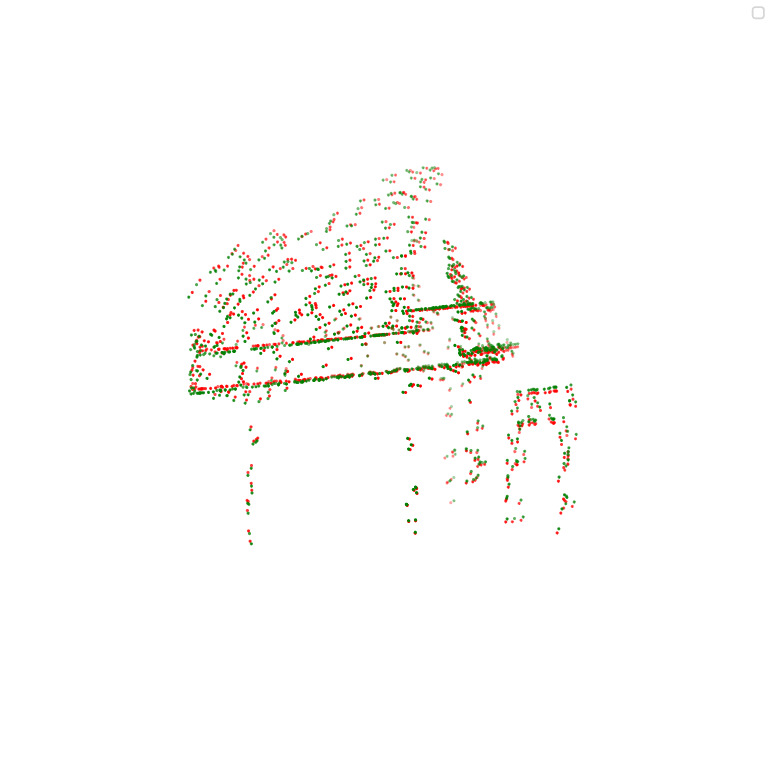} \\
\end{tabular}
\caption{Comparison of ESM-ICP alignment to other geometric-based approaches.}
\label{fig:geometric_based_approaches_comparison}
\end{figure*}

\begin{figure*}[!htb]
\centering
\renewcommand{\arraystretch}{0.0} 
\setlength{\tabcolsep}{4pt}      
\begin{tabular}{c c c c c c}
\textbf{Input} & \textbf{DCP}~\cite{wang2019deep} & \textbf{PointNetLK}~\cite{aoki2019pointnetlk} & \textbf{RpmNet}~\cite{confcvprYewL20} & \textbf{DeepGMR}~\cite{jian2005robust} & \textbf{ESM-ICP} \\
\includegraphics[width=0.15\textwidth]{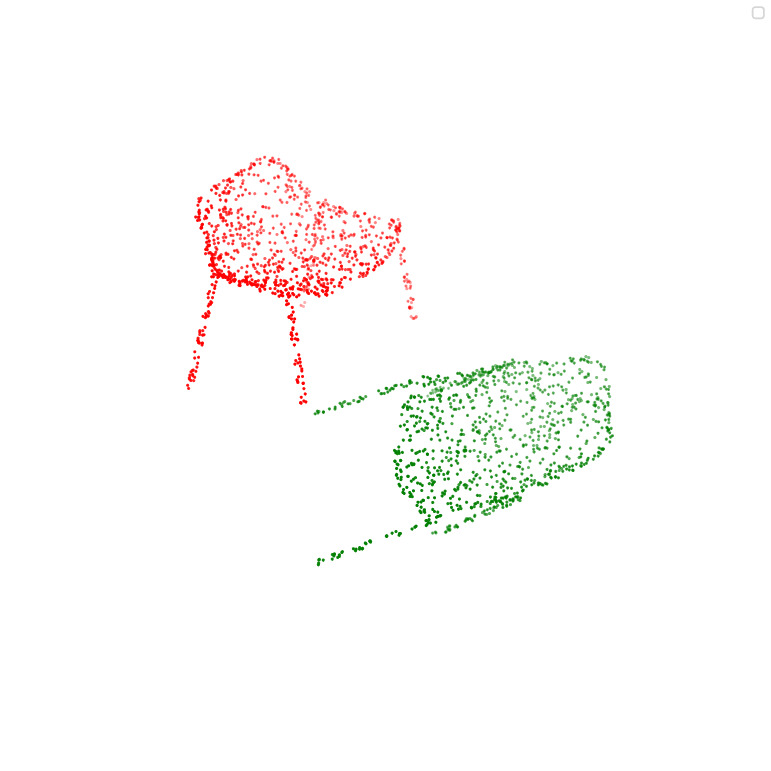} & 
\includegraphics[width=0.15\textwidth]{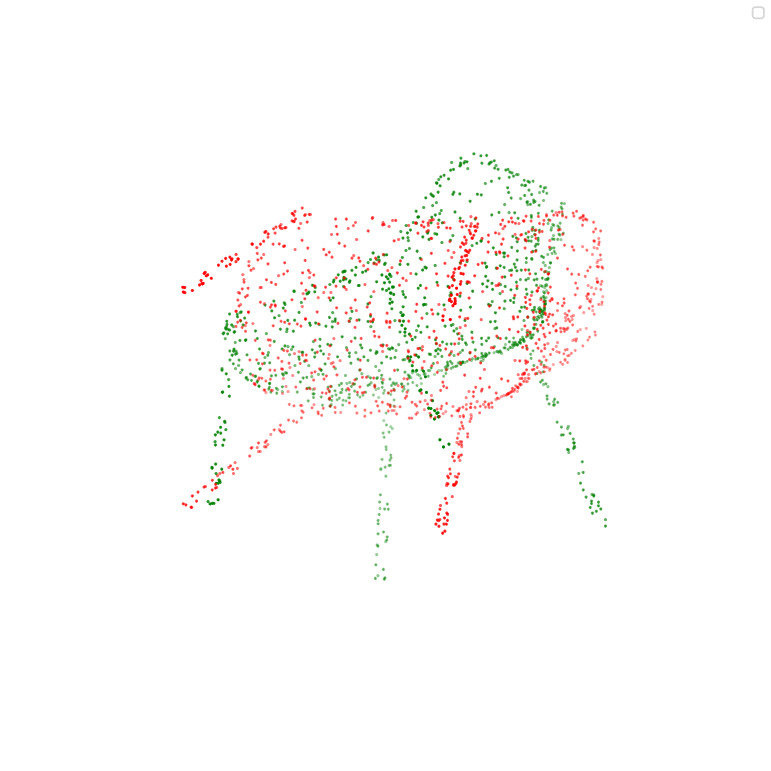} &
\includegraphics[width=0.15\textwidth]{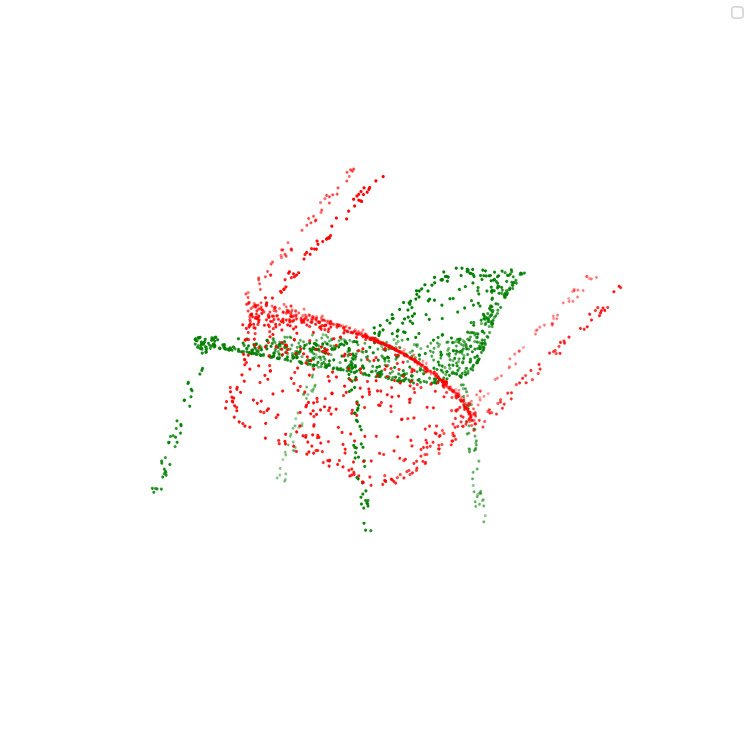} &
\includegraphics[width=0.15\textwidth]{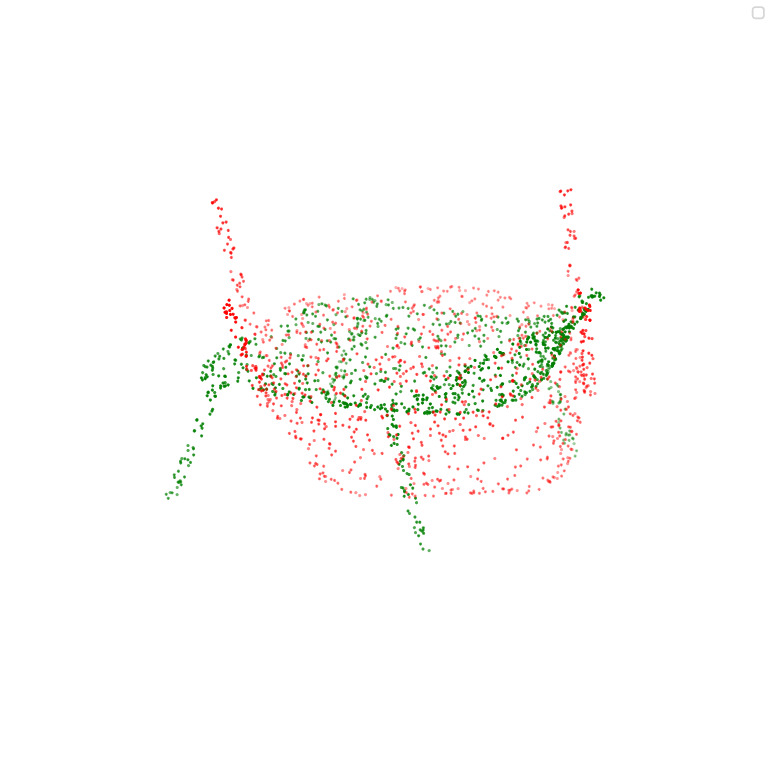} &
\includegraphics[width=0.15\textwidth]{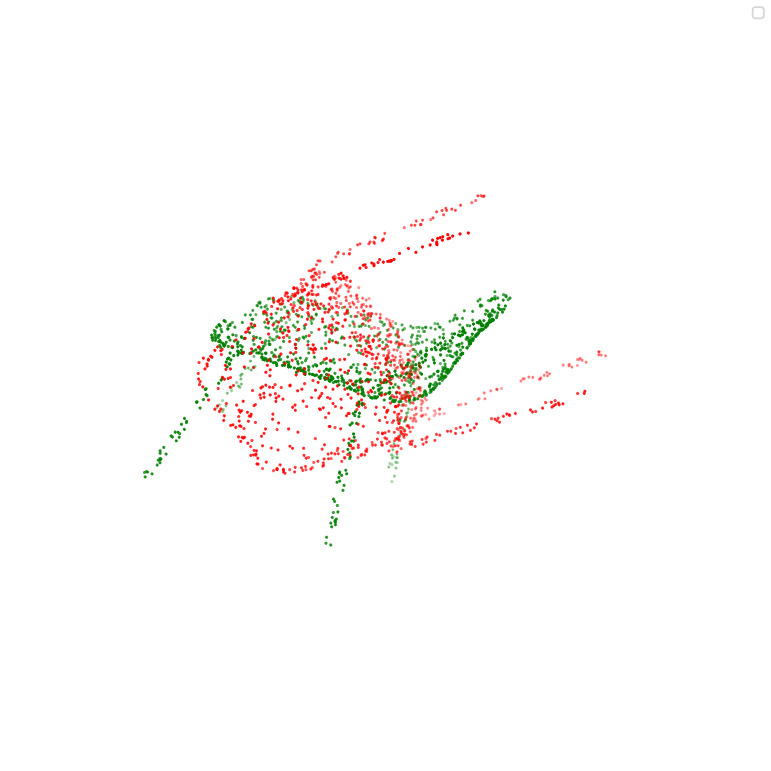} &
\includegraphics[width=0.15\textwidth]{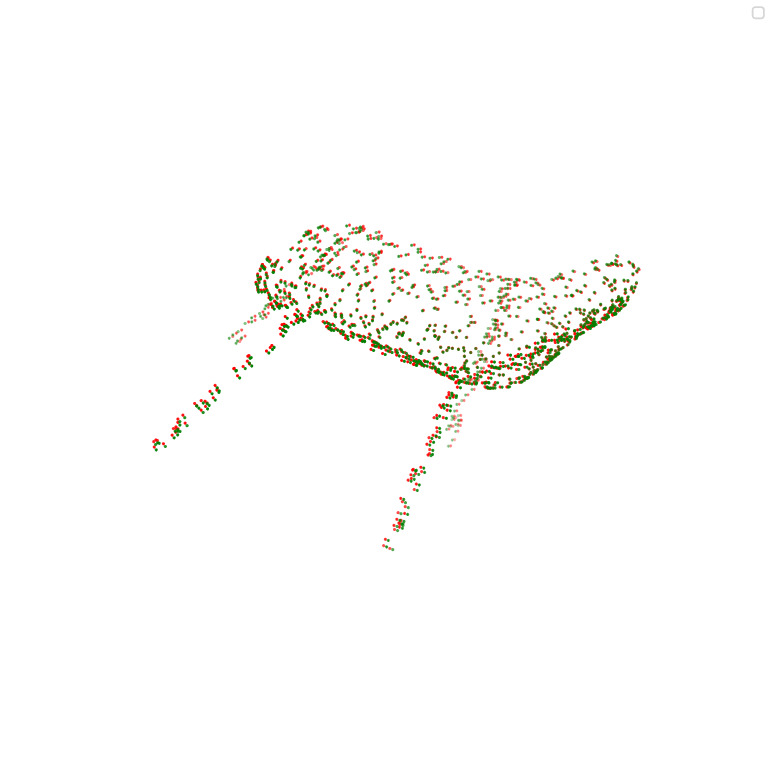} 
\\
\includegraphics[width=0.15\textwidth]{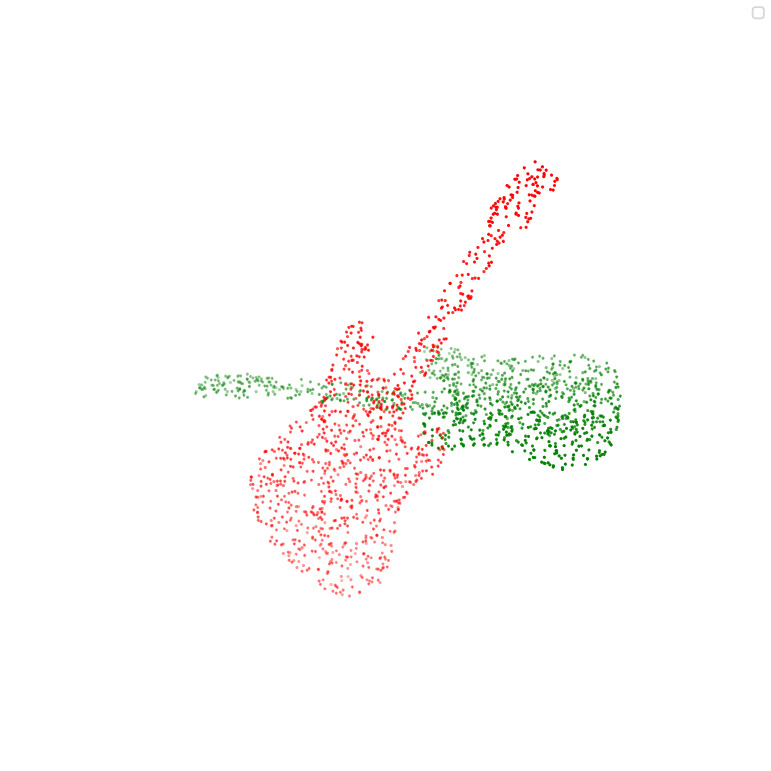} &
\includegraphics[width=0.15\textwidth]{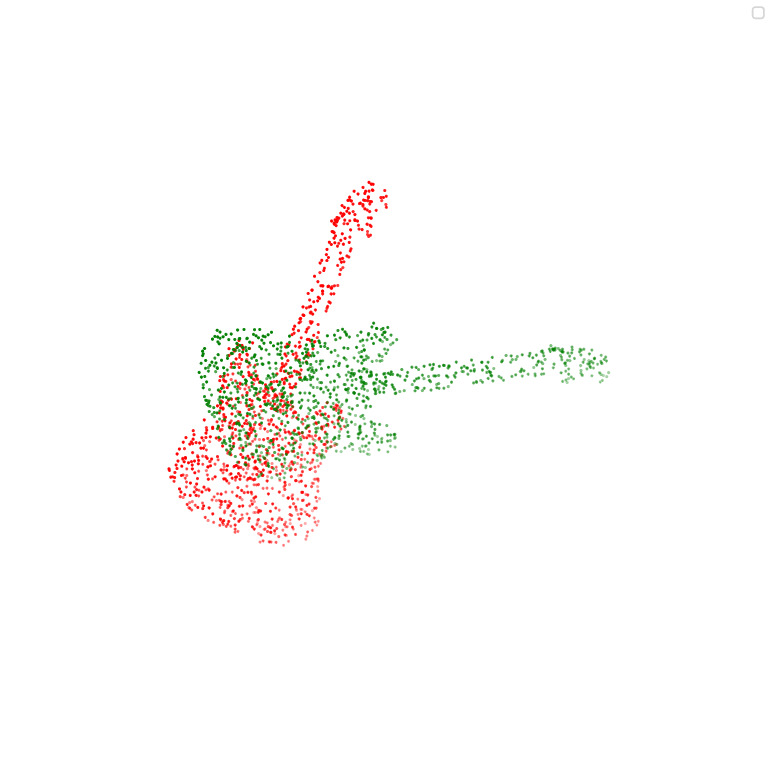} &
\includegraphics[width=0.15\textwidth]{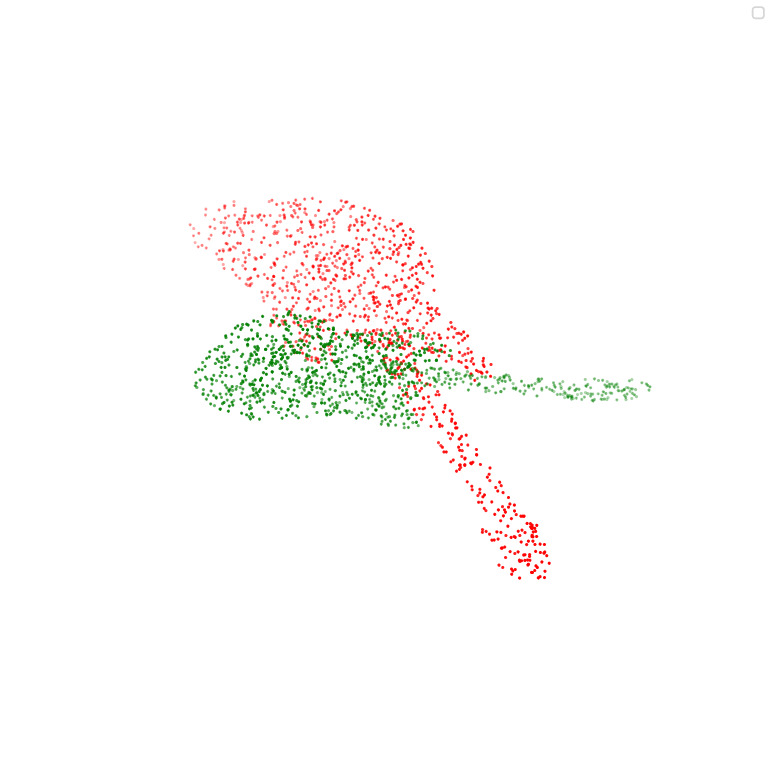} &
\includegraphics[width=0.15\textwidth]{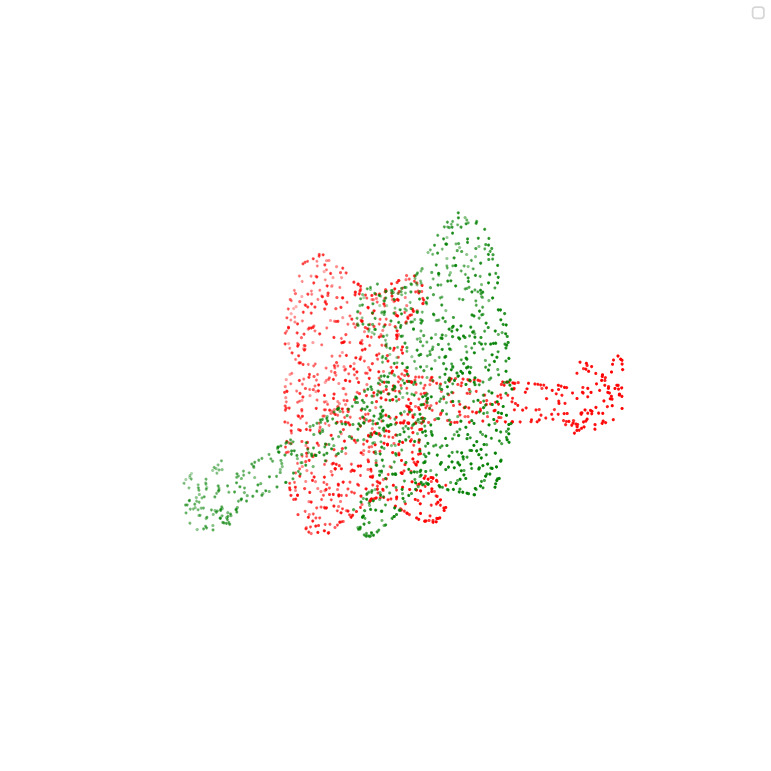} &
\includegraphics[width=0.15\textwidth]{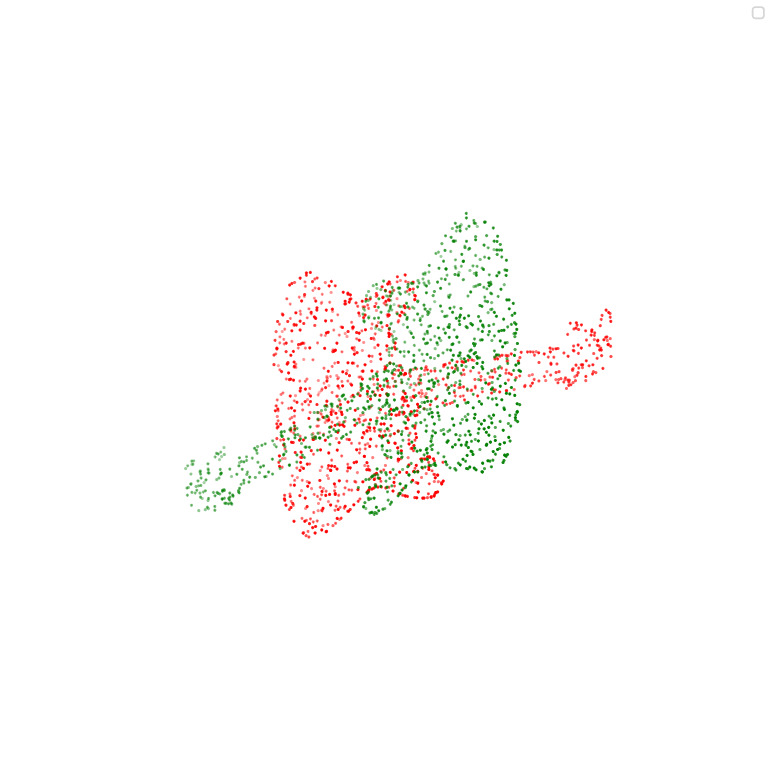} &
\includegraphics[width=0.15\textwidth]{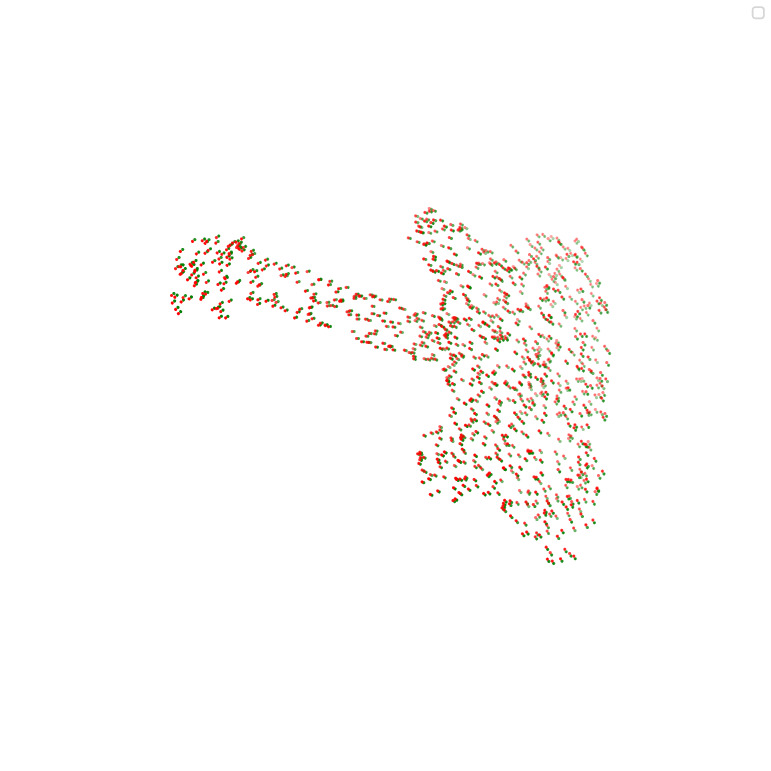} 
\\

\includegraphics[width=0.15\textwidth]{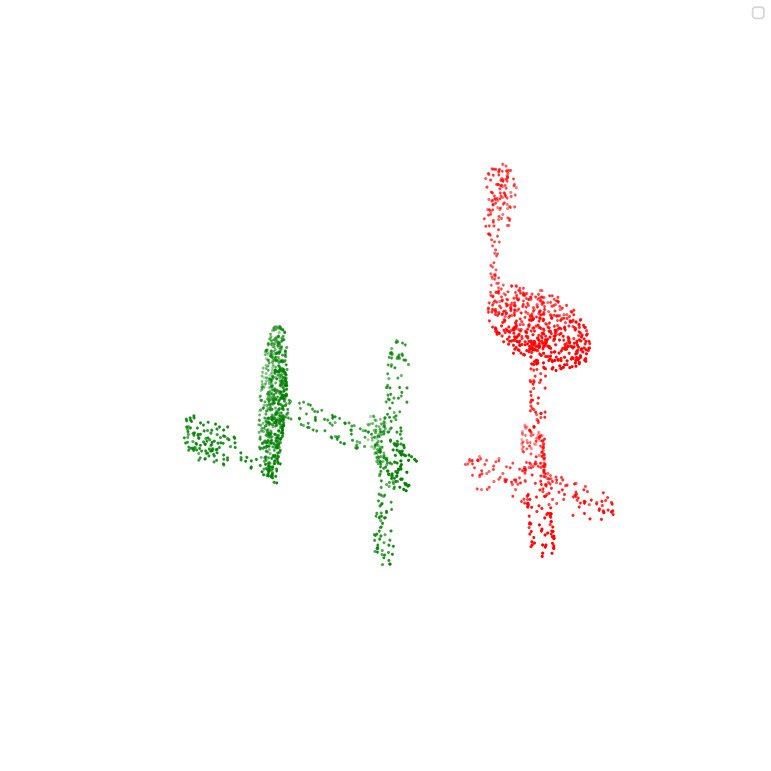} &
\includegraphics[width=0.15\textwidth]{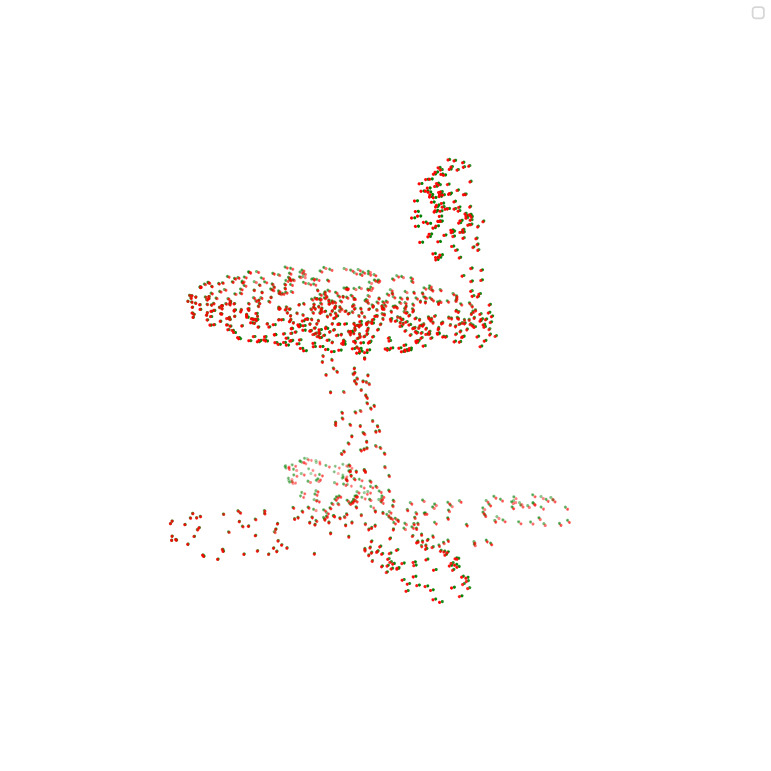} &
\includegraphics[width=0.15\textwidth]{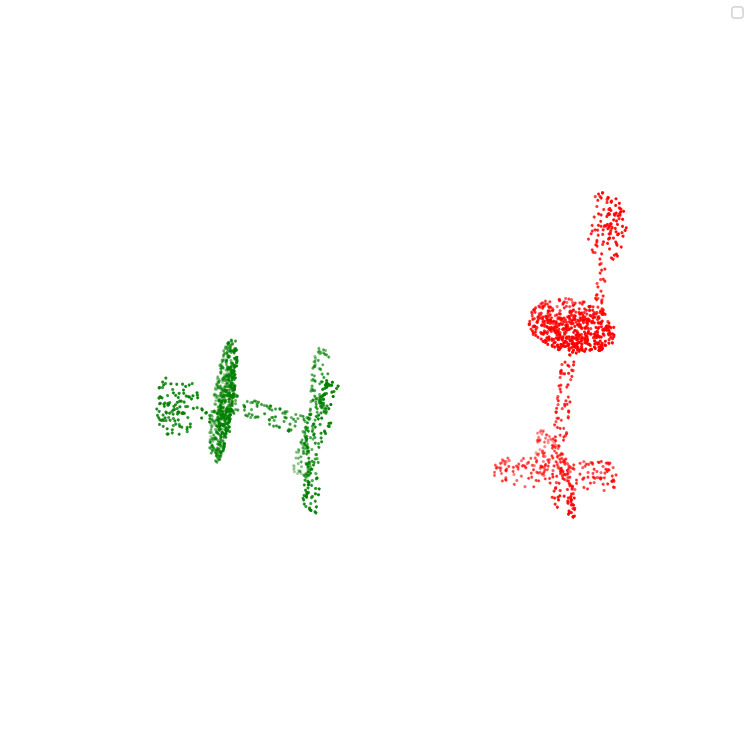} &
\includegraphics[width=0.15\textwidth]{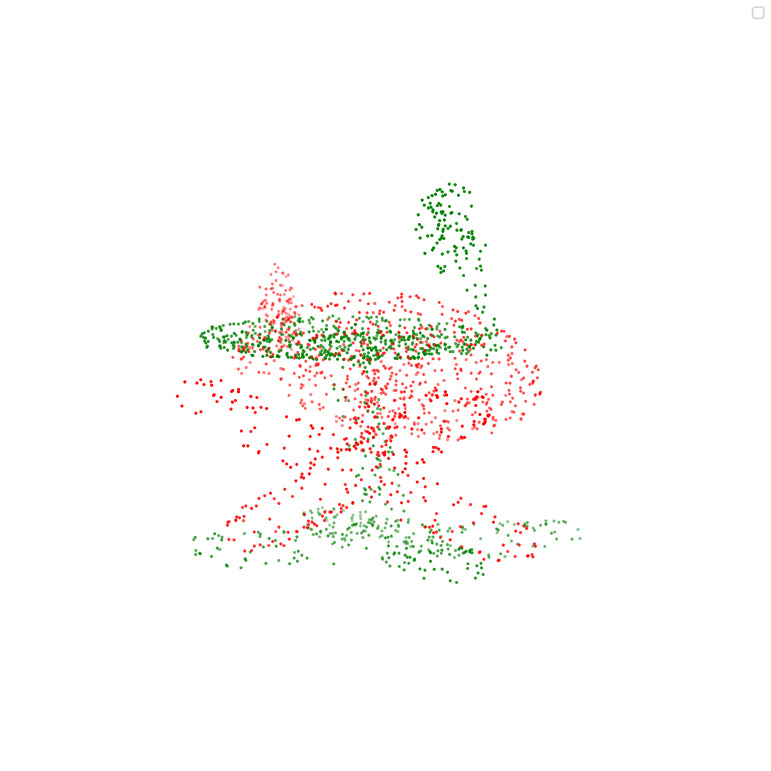} & 
\includegraphics[width=0.15\textwidth]{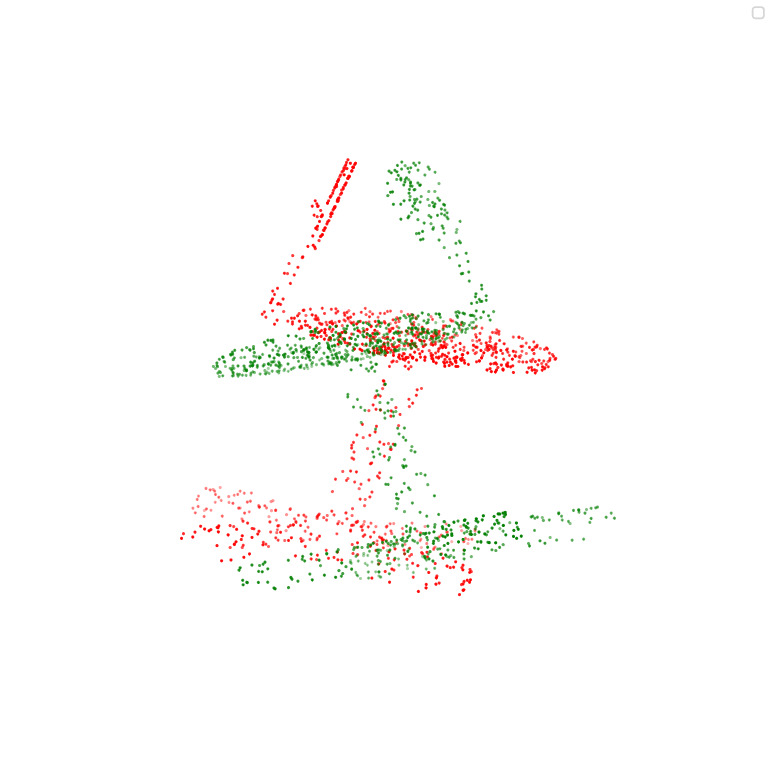}  &
\includegraphics[width=0.15\textwidth]{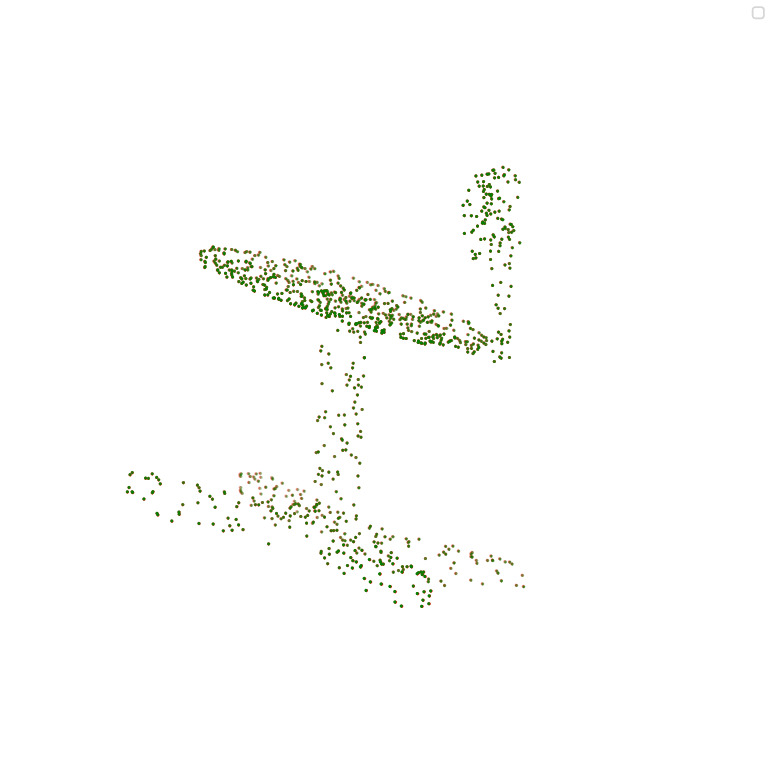} \\

\end{tabular}
\caption{Comparison to deep learning methods.}
\label{fig:ai_method_comparison}
\end{figure*}

\begin{figure*}[!htb]
\centering
\renewcommand{\arraystretch}{0.0} 
\setlength{\tabcolsep}{4pt}      
\begin{tabular}{c c c c c c } 
\textbf{} & \textbf{} & \textbf{} & \textbf{} & \textbf{} & \textbf{} \\
 
\includegraphics[width=0.15\textwidth]{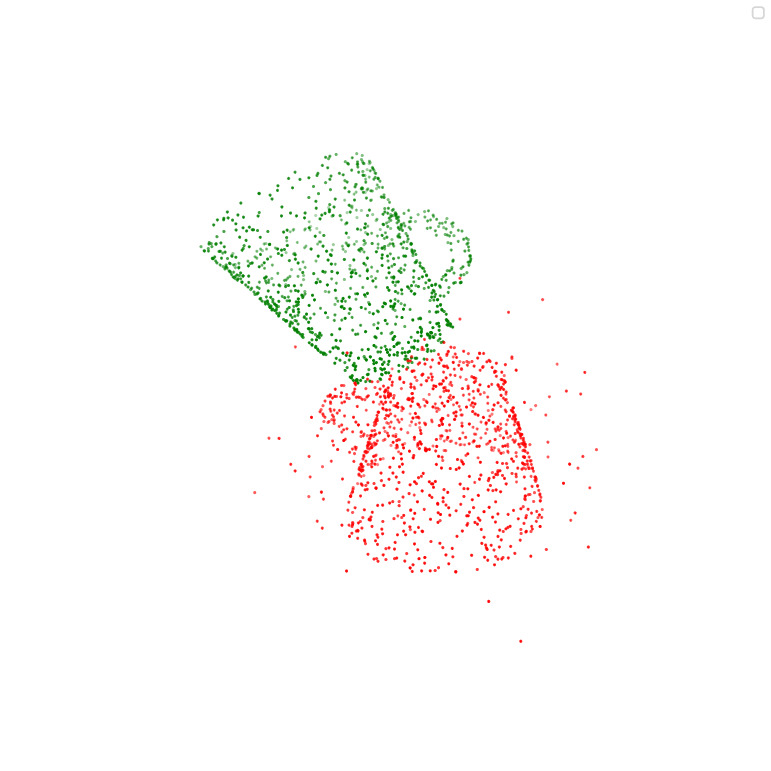} & \includegraphics[width=0.15\textwidth]{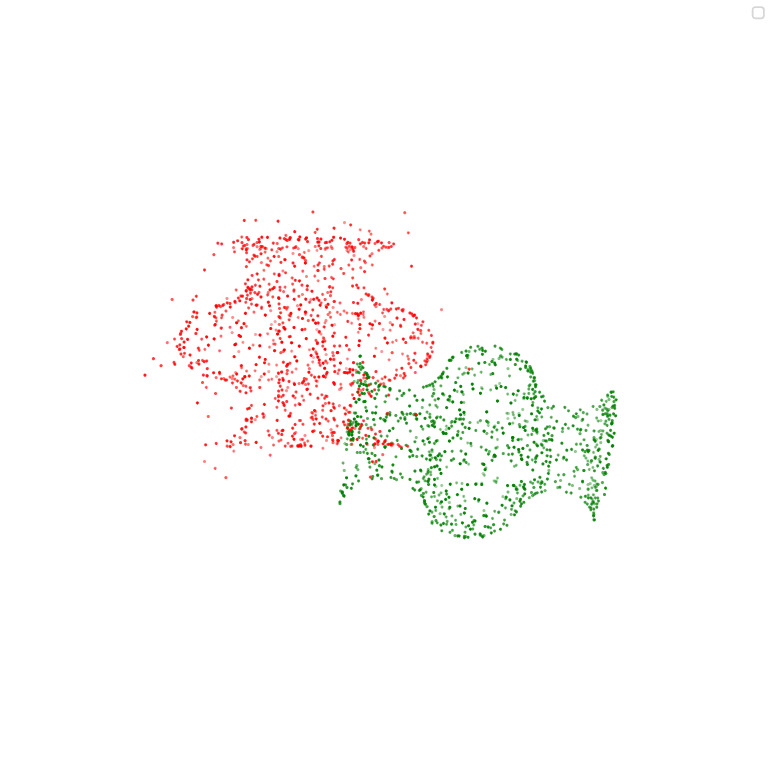} &
\includegraphics[width=0.15\textwidth]{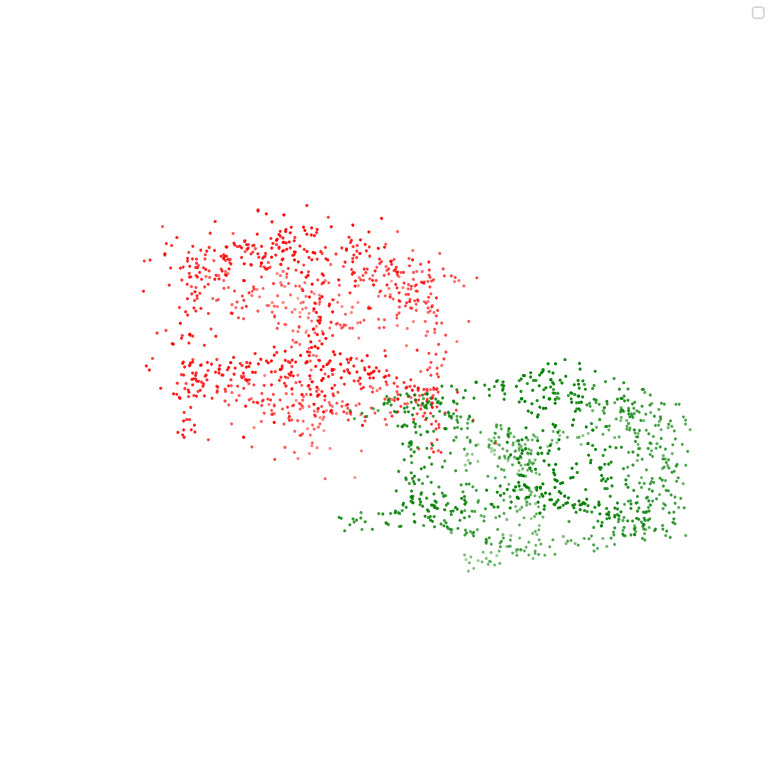} &
\includegraphics[width=0.15\textwidth]{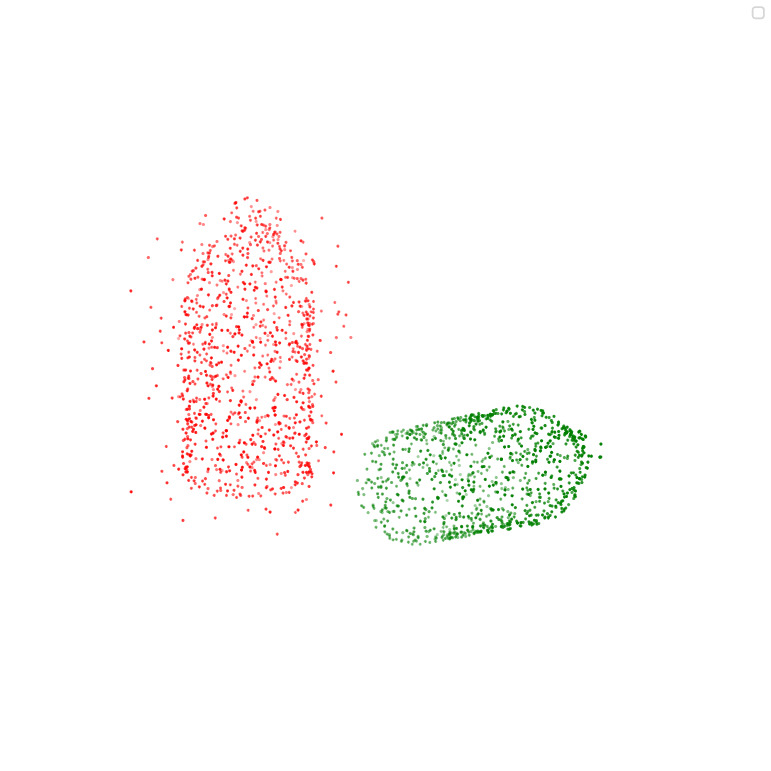} & \includegraphics[width=0.15\textwidth]{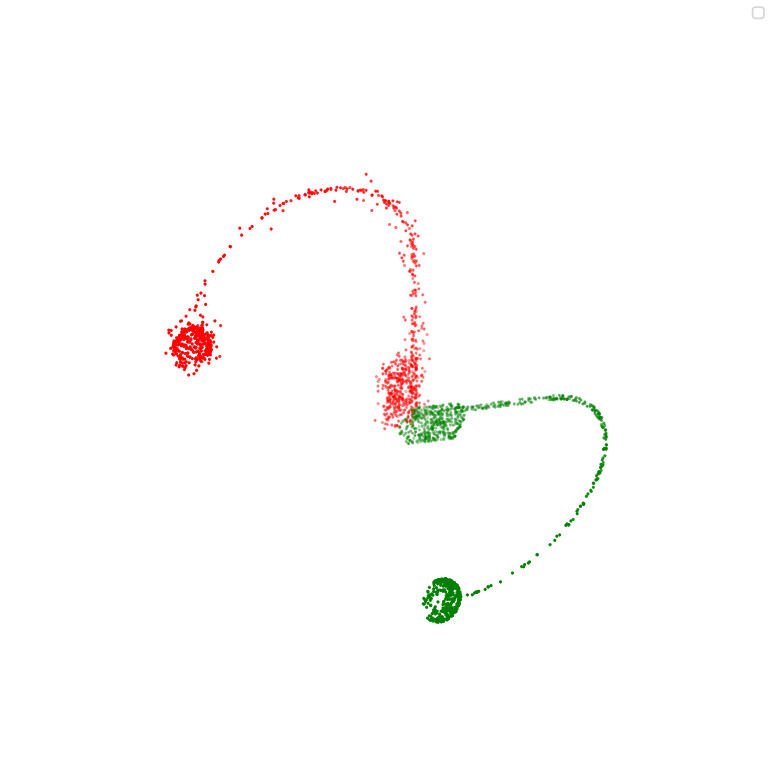} & \includegraphics[width=0.15\textwidth]{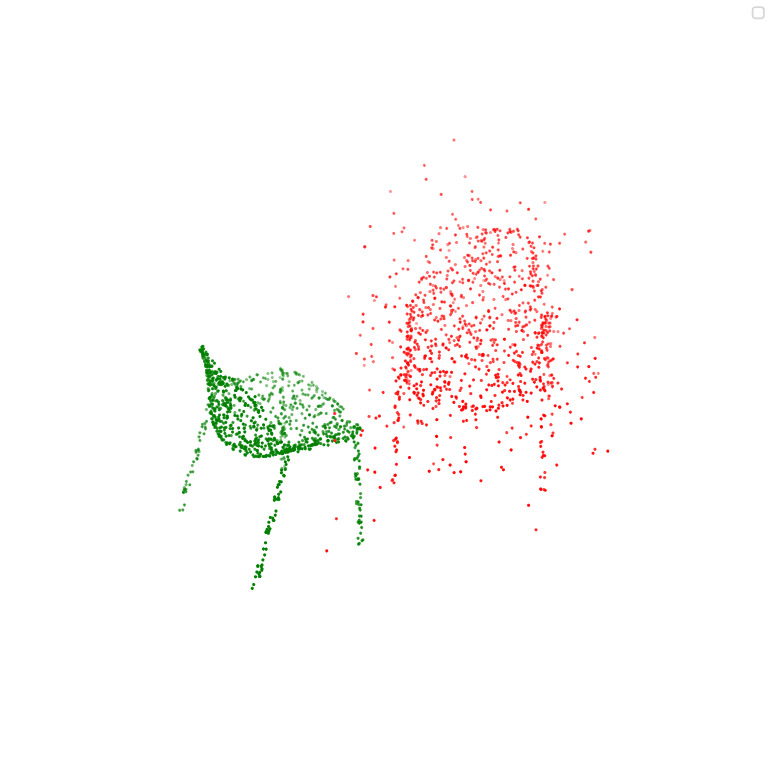} \\

\includegraphics[width=0.15\textwidth]{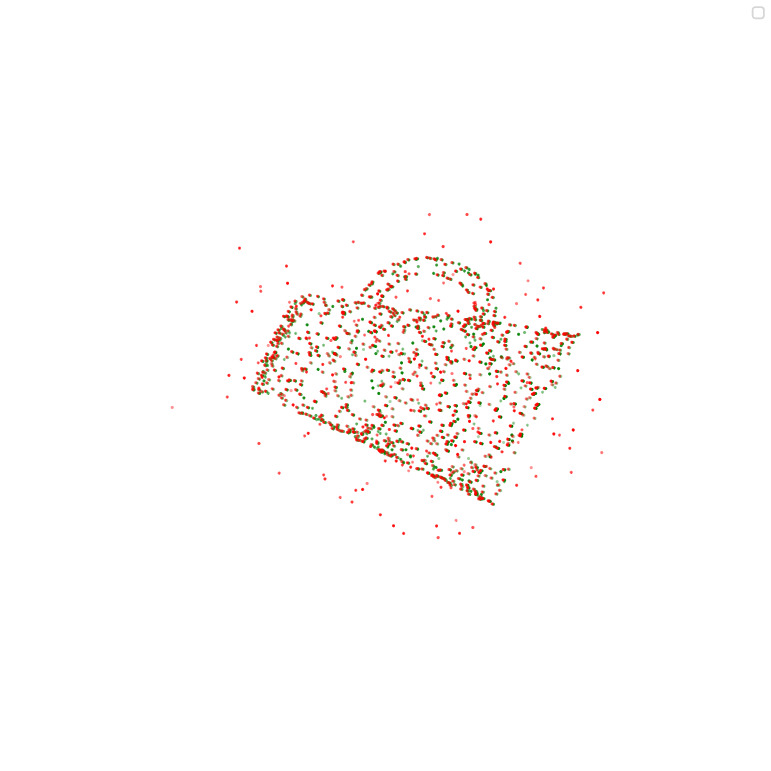} & \includegraphics[width=0.15\textwidth]{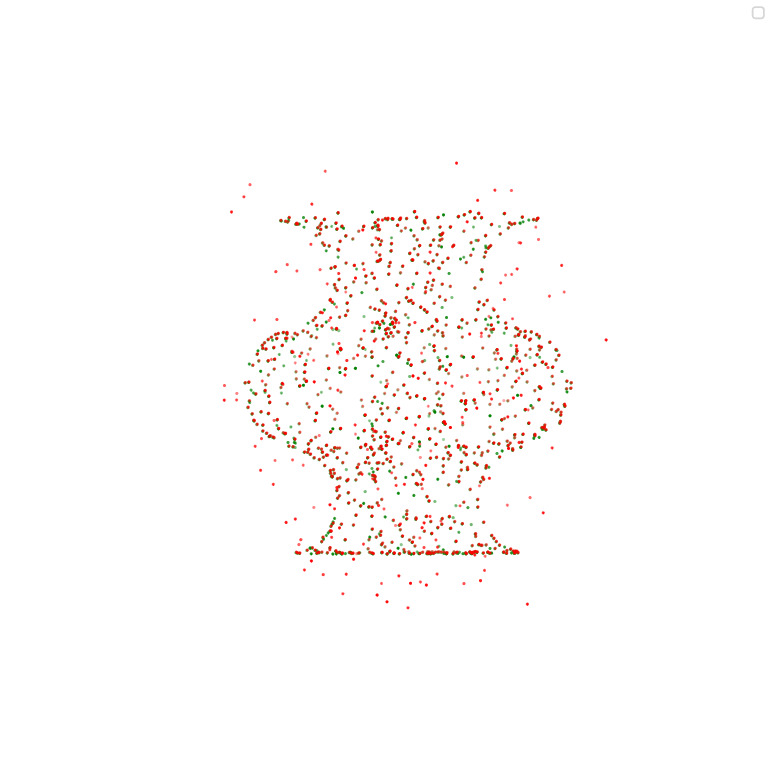} &
\includegraphics[width=0.15\textwidth]{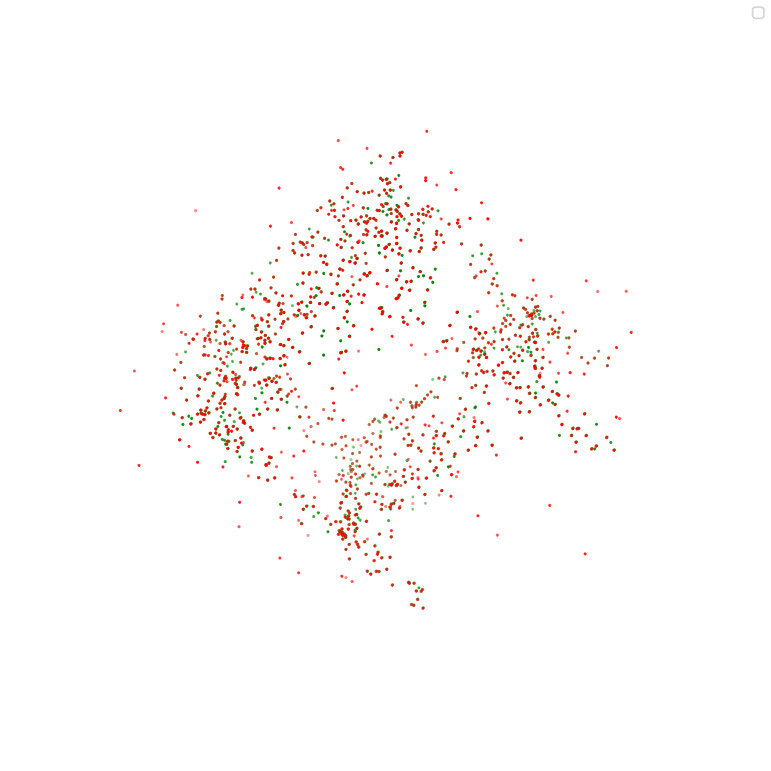} &
\includegraphics[width=0.15\textwidth]{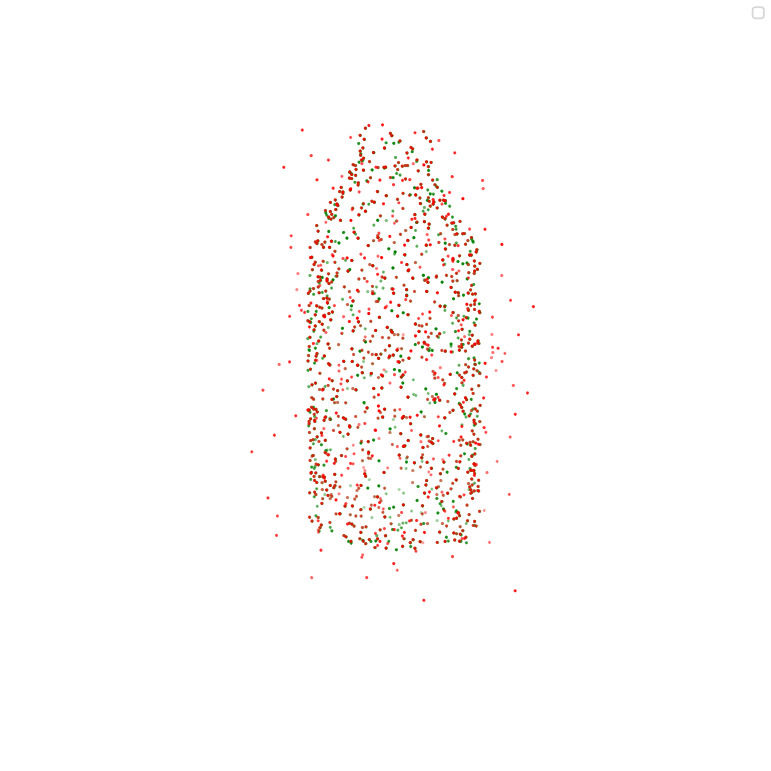} & \includegraphics[width=0.15\textwidth]{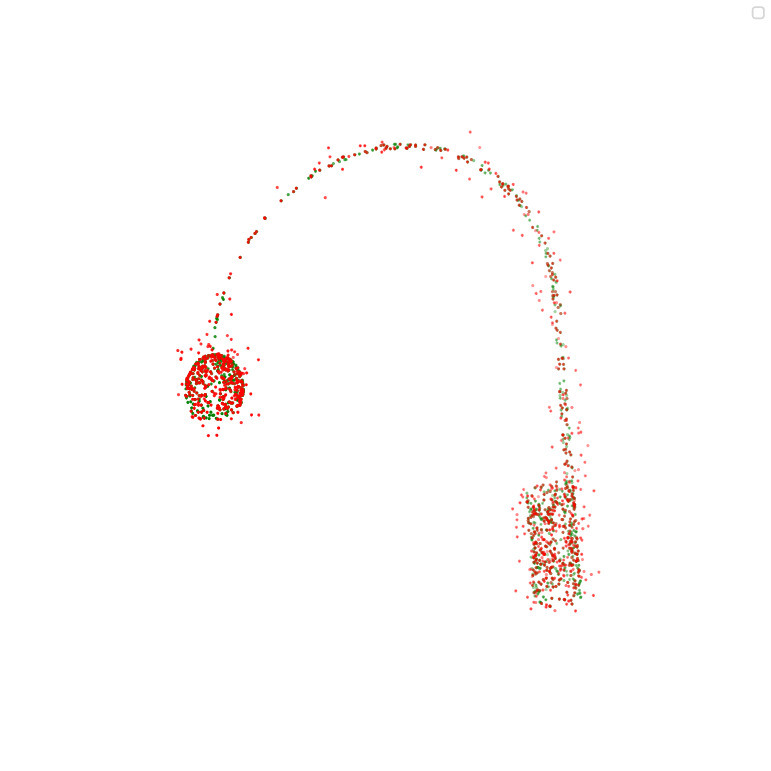} & \includegraphics[width=0.15\textwidth]{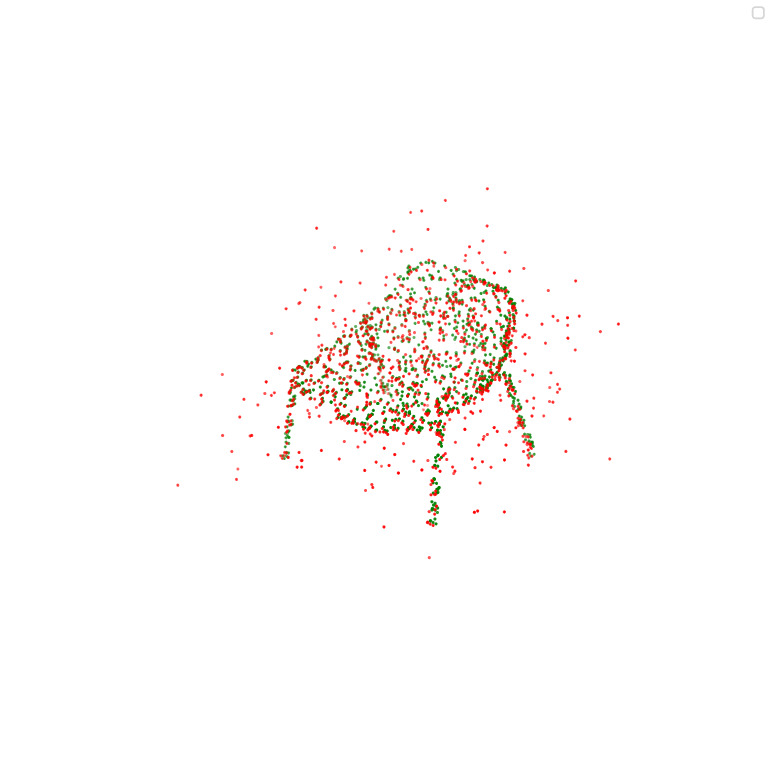} \\
\end{tabular}
\caption{Performance in the presence of outliers. The top row denotes the input, and the bottom row denotes the output from ESM-ICP.}
\label{fig:outlier_performance}
\end{figure*}

\begin{table}[h!]
\small
\centering
\caption{ModelNet40: Evaluation on Airplane dataset with random transformation matrix $(r,p,y,{\tau}_x,{\tau}_y,{\tau}_z)=(2.58175, -2.16765, 0.711617, 11.0585, 16.2285, -17.0047)$ after $40$ iterations.}
\resizebox{0.50\textwidth}{!}{  
\begin{tabular}{|l|c|c|c|c|c|c|}
\hline
\textbf{Model:Airplane} & \textbf{MSE($R$)} & \textbf{RMSE($R$)} & \textbf{MAE($R$)} & \textbf{MSE($t$)} & \textbf{RMSE($t$)} & \textbf{MAE($t$)} \\
\hline
ICP~\cite{10.1109/TPAMI.1987.4767965,121791}          & 0.888774 & 0.942748 & 0.722478 & 444942 & 667.04 & 0.722478 \\
ICP P2PL~\cite{articleptp}     & 0.887952 & 0.942312 & 0.627701 & 396629 & 629.785 & 535.349 \\
ICP NL~\cite{granger2002em,zhou2015affine}       & 0.861814 & 0.928339 & 0.725019 & 227.584 & 15.0859 & 14.7941 \\
GO-ICP~\cite{yang2015goicp} & 0.711808 & 0.843687 & 0.701879 & 227.839 & 15.0943 & 14.743 \\
RpmNET~\cite{confcvprYewL20}        & 0.033537 & 0.183130 & 0.151623 & 14822.6 & 121.748 & 107.798 \\
PointNetLK~\cite{aoki2019pointnetlk}   & 0.883079 & 0.939723 & 0.721705 & 321.678 & 17.9354 & 17.0547 \\
DCP~\cite{wang2019deep}          & 0.537342 & 0.733036 & 0.629785 & 330673 & 575.042 & 469.834 \\
\hline
\textbf{ESM-ICP (ours)} & \textbf{4.749e-16} & \textbf{2.179e-08} & \textbf{1.076e-08} & \textbf{7.549e-11} & \textbf{8.688e-06} & \textbf{7.947e-06} \\
\hline
\end{tabular}
}
\label{tab:modelnet40_airplane_results}
\end{table}

\begin{table}[h!]
\small
\centering
\caption{ModelNet40: Evaluation on Person dataset with random transformation matrix $(r,p,y,{\tau}_x,{\tau}_y,{\tau}_z)=(-2.56814, 1.15753, 1.92252, 0.0635863, 0.0883894, 0.0158466)$ after $40$ iterations.}
\resizebox{0.50\textwidth}{!}{  
\begin{tabular}{|l|c|c|c|c|c|c|}
\hline
\textbf{Model:} & \textbf{MSE($R$)} & \textbf{RMSE($R$)} & \textbf{MAE($R$)} & \textbf{MSE($t$)} & \textbf{RMSE($t$)} & \textbf{MAE($t$)} \\
\hline
ICP~\cite{10.1109/TPAMI.1987.4767965,121791}          & 0.888147 & 0.942415 & 0.811008 & 1823.66 & 42.7044 & 25.7527 \\
ICP P2PL~\cite{articleptp}     & 0.708472 & 0.841708 & 0.721835 & 1431.99 & 37.8417 & 29.8622 \\
ICP NL~\cite{granger2002em,zhou2015affine}        & 0.812057 & 0.901142 & 0.808517 & 0.0107494 & 0.103679 & 0.808517\\
GO-ICP~\cite{yang2015goicp}       & 8.95827e-14 & 2.99304e-07 & 2.16895e-07 & 5.11454e-10 & 2.26154e-05 & 1.4035e-05\\
RpmNET~\cite{confcvprYewL20}       & 0.866954 & 0.931104 & 0.783011 & 969.146 & 31.1311 & 28.6128 \\
PointNetLK~\cite{aoki2019pointnetlk}    & 0.885005 & 0.940747 & 0.805605 & 65.5633 & 8.09712 & 7.25109 \\
DCP~\cite{wang2019deep}             & 0.784646 & 0.885803 & 0.666492 & 3292.4 & 57.3795 & 51.1154 \\
\hline
\textbf{ESM-ICP (ours)} & \textbf{2.46716e-16} & \textbf{1.57072e-08} & \textbf{9.93411e-09} & \textbf{1.19354e-12} & \textbf{1.09249e-06} & \textbf{9.10833e-07} \\
\hline
\end{tabular}
}
\label{tab:modelnet40_person_results}
\end{table}

\begin{table}[h!]
\small
\centering
\caption{ModelNet40: Evaluation on Vase dataset (with $30\%$ of the data affected with outliers) with random transformation matrix $(r,p,y,{\tau}_x,{\tau}_y,{\tau}_z)=(-2.51358, 0.916596, 1.10051, 105.383, -53.3576, -16.489)$ after $40$ iterations.}
\resizebox{0.50\textwidth}{!}{  
\begin{tabular}{|l|c|c|c|c|c|c|}
\hline
\textbf{Model:} & \textbf{MSE($R$)} & \textbf{RMSE($R$)} & \textbf{MAE($R$)} & \textbf{MSE($t$)} & \textbf{RMSE($t$)} & \textbf{MAE($t$)} \\
\hline
ICP~\cite{10.1109/TPAMI.1987.4767965,121791}          & 0.864082 & 0.92956 & 0.794351 & 23136.8 & 152.108 & 125.499 \\
ICP P2PL~\cite{articleptp}      & 0.438943 & 0.662528 & 0.50979 & 24721.3 & 157.23 & 135.133 \\
ICP NL~\cite{granger2002em,zhou2015affine}        & 0.888659 & 0.942687 & 0.813719 & 4741.5 & 68.8585 & 58.4099\\
GO-ICP~\cite{yang2015goicp}       & 0.0755686 & 0.274897 & 0.240074 & 4173.73 & 64.6044 & 56.6411\\
RpmNET~\cite{confcvprYewL20}       & 0.887304 & 0.941968 & 0.8128 & 16041.9 & 126.657 & 112.902\\
PointNetLK~\cite{aoki2019pointnetlk}    & 0.882742 & 0.939544 & 0.738511 & 4548.29 & 67.441 & 60.0684  \\
DCP~\cite{wang2019deep}             & 0.888659 & 0.942687 & 0.813719 & 84140.6 & 290.07 & 245.192 \\
\hline
\textbf{ESM-ICP (ours)} & \textbf{2.26497e-07} & \textbf{0.000475917} & \textbf{0.000384784} & \textbf{0.13024} & \textbf{0.360888} & \textbf{0.356485} \\
\hline
\end{tabular}
}
\label{tab:modelnet40_vase_outliers_results}
\end{table}

\begin{table}[h!]
\small
\centering
\caption{ModelNet40: Evaluation on Bottle dataset (with $60\%$ of the data affected with outliers) with random transformation matrix $(r,p,y,{\tau}_x,{\tau}_y,{\tau}_z)=(2.82862, -0.905772, 1.88224, -62.3639, -31.1694, -56.1162)$ after $40$ iterations.}
\resizebox{0.50\textwidth}{!}{  
\begin{tabular}{|l|c|c|c|c|c|c|}
\hline
\textbf{Model:} & \textbf{MSE($R$)} & \textbf{RMSE($R$)} & \textbf{MAE($R$)} & \textbf{MSE($t$)} & \textbf{RMSE($t$)} & \textbf{MAE($t$)} \\
\hline
ICP~\cite{10.1109/TPAMI.1987.4767965,121791}          & 0.888844 & 0.942785 & 0.839232 & 15541 & 124.663 & 110.264 \\
ICP P2PL~\cite{articleptp}      & 0.888172 & 0.942429 & 0.842185 & 12393.2 & 111.325 & 80.3513\\
ICP NL~\cite{granger2002em,zhou2015affine}        & 0.826324 & 0.909024 & 0.909024 & 2648.4 & 51.4626 & 49.6877 \\
GO-ICP~\cite{yang2015goicp}       & 0.130485 & 0.361228 & 0.309347 & 2777.48 & 52.7018 & 51.1383 \\
RpmNET~\cite{confcvprYewL20}       & 0.679944 & 0.824587 & 0.681382 &  5850.54 & 76.4888  & 60.0398 \\
PointNetLK~\cite{aoki2019pointnetlk}    & 0.885858 & 0.9412 & 0.735128 & 2695.98 & 51.9228 & 50.5983\\
DCP~\cite{wang2019deep}             & 0.825616 & 0.908634 & 0.792235 & 11491 & 107.196 & 81.2461 \\
\hline
\textbf{ESM-ICP (ours)} & \textbf{1.90873e-05} & \textbf{0.00436891} & \textbf{0.00354401} & \textbf{0.220783} & \textbf{0.469875} & \textbf{0.424405} \\
\hline
\end{tabular}
}
\label{tab:modelnet40_bottle_outliers_results}
\end{table}

\subsection{Effect of $\sigma$.}
As highlighted in the previous section, our proposed method consistently demonstrates strong performance across various ModelNet40 datasets when the parameter $\sigma$ is set to values between $0.05$ and $1.0$. This observation suggests that the effectiveness of $\sigma$ is inherently linked to the density of the point cloud data. For point clouds with higher spatial resolution, a smaller $\sigma$ tends to be sufficient, whereas lower-density data may benefit from a larger $\sigma$ to preserve alignment stability. Identifying an adaptive or data-driven strategy to select the optimal $\sigma$ for different sensing modalities and application domains remains a promising direction for future research.

\subsection{Comparison with KISS-Matcher}
During the final stages of this work, we became aware of another algorithm for point set registration known as KISS-Matcher~\cite{lim2025icra-KISSMatcher}. KISS-Matcher~\cite{lim2025icra-KISSMatcher} introduces a novel feature detector, Faster-FPFH, and incorporates a $k$-core based graph-theoretic pruning strategy to efficiently reject outlier correspondences and reduce computational complexity. Recognizing its potential, we evaluated KISS-Matcher~\cite{lim2025icra-KISSMatcher} under random transformations and observed that it exhibited strong robustness to large rotations, comparable to our own results. To further strengthen our validation, we conducted a comparative analysis between our method and KISS-Matcher~\cite{lim2025icra-KISSMatcher} across the ModelNet40 datasets (airplane, person, and stool).

We performed regression testing by loading point clouds as \textit{Source} and applying approximately 1000 random transformations — with rotations sampled from $[-\pi, \pi]$ and translations from $[-1000, 1000]$ — to generate the corresponding \textit{Target}, consistent with our prior experiments. For our method, we capped the iteration count at 100. Across these various transformations, both KISS-Matcher~\cite{lim2025icra-KISSMatcher} and our proposed ESM-ICP method demonstrated successful convergence. However, when comparing the mean error metrics for both rotation and translation components, ESM-ICP consistently outperformed KISS-Matcher~\cite{lim2025icra-KISSMatcher}. Table~\ref{tab:modelnet40_kiss_mathcer_vs_ESM-ICP} presents the quantitative comparison of the two methods. Furthermore, ESM-ICP achieved significantly superior performance in scenarios with outliers, as detailed in Table~\ref{tab:stool_kiss_mathcer_vs_ESM-ICP_30_per}.

\begin{table}[h!]
\small
\centering
\caption{Comparative analysis of ESM-ICP and KISSMatcher~\cite{lim2025icra-KISSMatcher} across 1000 different transformations ranging from $[-\pi,\pi]$ rotation wise and $[-1000,1000]$ translation wise. The metrics are computed as mean of 1000 different transformations.}
\resizebox{0.50\textwidth}{!}{  
\begin{tabular}{|l|c|c|c|c|c|c|}
\hline
\textbf{Model:} & \textbf{MSE($R$)} & \textbf{RMSE($R$)} & \textbf{MAE($R$)} & \textbf{MSE($t$)} & \textbf{RMSE($t$)} & \textbf{MAE($t$)} \\
\hline
Airplane (KISS-Matcher)~\cite{lim2025icra-KISSMatcher} & 3.77e-09 & 0.000007 & 0.000005 & 0.001126 & 0.003973 & 0.003349 \\
Person (KISS-Matcher)~\cite{lim2025icra-KISSMatcher} & 0.000231 & 0.000507 & 0.000428 & 0.930943 & 0.032311 & 0.023097 \\
Stool (KISS-Matcher)~\cite{lim2025icra-KISSMatcher} & 0.000004 & 0.001152 & 0.000936 & 0.000003  & 0.000975 & 0.000847 \\
\hline
Airplane (\textbf{ESM-ICP (ours)}) & 2.28e-09 & 0.000002 & 0.000001 & 0.000690 & 0.000890 & 0.000821 \\
Person (\textbf{ESM-ICP (ours)}) & 1.832808e-11 & 2.424087e-06 & 1.970518e-06 &  3.173788e-08 & 1.074056e-04  & 9.120257e-05 \\
Stool (\textbf{ESM-ICP (ours)})    & 1.407338e-07 & 2.416280e-04 & 1.955315e-04 & 8.803378e-08 & 1.924926e-04 & 1.623945e-04\\

\hline
\end{tabular}
}
\label{tab:modelnet40_kiss_mathcer_vs_ESM-ICP}
\end{table}

\begin{table}[h!]
\small
\centering
\caption{Comparative analysis of ESM-ICP and KISS-Matcher~\cite{lim2025icra-KISSMatcher} across 1000 different transformations ranging from $[-\pi,\pi]$ rotation wise and $[-1000,1000]$ translation wise. Here we use the stool dataset and in addition to these random transformation we corrupt $30 \%$ of the source dataset with random noise. The target dataset remains unchanged.}
\resizebox{0.50\textwidth}{!}{  
\begin{tabular}{|l|c|c|c|c|c|c|}
\hline
\textbf{Model:} & \textbf{MSE($R$)} & \textbf{RMSE($R$)} & \textbf{MAE($R$)} & \textbf{MSE($t$)} & \textbf{RMSE($t$)} & \textbf{MAE($t$)} \\
\hline
Stool (KISS-Matcher)~\cite{lim2025icra-KISSMatcher} & 0.073004 &  0.127731 &  0.102708 & 27542.029047  & 45.946623 & 41.252042 \\
\hline
Stool (\textbf{ESM-ICP (ours)})    & 0.073224 &  0.220009  &  0.179977  & 0.040046  & 0.164954   & 0.142406 \\

\hline
\end{tabular}
}
\label{tab:stool_kiss_mathcer_vs_ESM-ICP_30_per}
\end{table}
\section{Conclusions and Future Work} 
\label{Section_5} 
One of the core objectives addressed in this work is the accurate estimation of the transformation between the Source and Target point clouds under varying rotations and translations. We evaluated our proposed approach across multiple datasets, where the Source was generated by applying randomly sampled transformation matrices to the Target. The results consistently demonstrate that our method outperforms existing state-of-the-art techniques regarding alignment accuracy.

Furthermore, we tested the robustness of our approach under noise conditions, where the Source was corrupted by randomly injected noise of varying intensity and at arbitrary points. Even in such challenging scenarios, our method maintained superior performance compared to leading alternatives, successfully estimating transformations between the noisy Source and the clean Target.

We strongly encourage readers to explore our implementation, which is publicly available on our GitHub repository. Looking ahead, we intend to extend our method to SLAM applications, leveraging the estimated poses for robot localization and environment mapping for aerial and ground platforms. 

\section*{Acknowledgement}
This work is supported by NASA EPSCoR under grant number: 80NSSC24M0141, the U.S. National Science Foundation (NSF) under grant NSF-CAREER: 1846513, and the U.S. Army's Engineer Research and Development Center under grant number: W911NF-23-1-0186. 
The views, opinions, findings, and conclusions reflected in this publication are solely those of the authors and do not represent the official policy or position of NASA, NSF, and the US Army.




\begin{thebibliography}{1}

\bibitem{10.1109/TPAMI.1987.4767965}
Arun, K. S., Huang, T. S., Blostein, S. D.. ``Least-Squares Fitting of Two 3-D Point Sets,'' \textit{IEEE Trans. Pattern Anal. Mach. Intell.}, vol. 9, pp. 698–700, 1987.

\bibitem{AshuIRC2018}
A. {Singandhupe}, H. M. La. ``A Review of SLAM Techniques and Security in Autonomous Driving,'' \textit{2019 Third IEEE International Conference on Robotic Computing (IRC)}, pp. 602-607, 2019.


\bibitem{121791}
P. J. {Besl}, N. D. {McKay. ``A method for registration of 3-D shapes,'' \textit{IEEE Transactions on Pattern Analysis and Machine Intelligence}, vol. 14, pp. 239-256},, 1992.

\bibitem{SehgalISVC2019}
"Sehgal, Adarsh. ``"Lidar-Monocular Visual Odometry with Genetic Algorithm for Parameter Optimization",'' \textit{"Advances in Visual Computing"}, pp. "358--370", "2019".

\bibitem{Ashu_MCCEKFSec}
A. Singandhupe, H. M. La . ``MCC-EKF for Autonomous Car Security,'' \textit{Proceedings of the 4th IEEE International Conference on Robotic Computing (IRC)}, 2020.


\bibitem{asingandhupej_2018}
A. Singandhupe, H. M. La, D. Feil-Seifer. ``Reliable Security Algorithm for Drones Using Individual Characteristics From an EEG Signal,'' \textit{IEEE Access}, vol. 6, pp. 2--12, 2018.

\bibitem{asingandhupec_2018}
A. Singandhupe, H. M. La, D. Feil-Seifer, P. Huang, L. Guo, M. Li. ``Securing a UAV Using Individual Characteristics From an EEG Signal,'' \textit{IEEE Intern. Conf. on Systems, Man, and Cybernetics (SMC)}, 2018.


\bibitem{articleptp}
Low, Kok-Lim. ``Linear Least-Squares Optimization for Point-to-Plane ICP Surface Registration,'' \textit{}, 2004.

\bibitem{datasetsource1}
"Stanford". ``"The Stanford 3D Scanning Repository",'' \textit{}, "2020".

\bibitem{yang2015goicp}
Yang, Jiaolong, Li, Hongdong, Jia, Yunde. ``Go-ICP: A globally optimal solution to 3D ICP point-set registration,'' \textit{IEEE Transactions on Pattern Analysis and Machine Intelligence}, vol. 38, pp. 2241--2254, 2015.

\bibitem{WU2019PR}
"Zongze Wu, Hongchen Chen, Shaoyi Du, Minyue Fu, Nan Zhou, Nanning Zheng". ``"Correntropy based scale ICP algorithm for robust point set registration",'' \textit{"Pattern Recognition"}, vol. "93", pp. "14 - 24", "2019".

\bibitem{zhang2018robust}
Zhang, Xuetao, Jian, Libo, Xu, Meifeng. ``Robust 3D point cloud registration based on bidirectional Maximum Correntropy Criterion,'' \textit{PloS one}, vol. 13, pp. e0197542, 2018.


\bibitem{xu2018precise}
Xu, Guanglin, Du, Shaoyi, Cui, Dixiao, Zhang, Sirui, Chen, Badong, Zhang, Xuetao, Xue, Jianru, Gao, Yue. ``Precise point set registration using point-to-plane distance and correntropy for LiDAR based localization,'' \textit{2018 IEEE Intelligent Vehicles Symposium (IV)}, pp. 734--739, 2018.

\bibitem{inproceedingsgicp}
Segal, Aleksandr, Hähnel, Dirk. ``Generalized-ICP,'' \textit{Proc. of Robotics: Science and Systems}, 2009.

\bibitem{kuglin1975phase}
Kuglin, Charles D, Hines, Daniel C. ``The phase correlation image alignment method,'' \textit{Proceedings of the IEEE 1975 International Conference on Cybernetics and Society}, pp. 163--165, 1975.

\bibitem{lowe2004distinctive}
Lowe, David G. ``Distinctive image features from scale-invariant keypoints,'' \textit{International Journal of Computer Vision}, vol. 60, pp. 91--110, 2004.

\bibitem{rusu2009sac}
Rusu, Radu Bogdan, Blodow, Nico, Beetz, Michael. ``Fast point feature histograms (FPFH) for 3D registration,'' \textit{2009 IEEE International Conference on Robotics and Automation (ICRA)}, pp. 3212--3217, 2009.

\bibitem{zhou2016fast}
Zhou, Qian-Yi, Park, Jaesik, Koltun, Vladlen. ``Fast global registration,'' \textit{European Conference on Computer Vision (ECCV)}, pp. 766--782, 2016.

\bibitem{bay2006surf}
Bay, Herbert, Tuytelaars, Tinne, Van Gool, Luc. ``SURF: Speeded Up Robust Features,'' \textit{Computer Vision--ECCV 2006}, pp. 404--417, 2006.

\bibitem{rusu2009fast}
Rusu, Radu Bogdan, Blodow, Norbert, Beetz, Michael. ``Fast Point Feature Histograms (FPFH) for 3D registration,'' \textit{IEEE International Conference on Robotics and Automation (ICRA)}, pp. 3212--3217, 2009.

\bibitem{bay2006surf}
Bay, Herbert, Tuytelaars, Tinne, Van Gool, Luc. ``SURF: Speeded Up Robust Features,'' \textit{Computer Vision--ECCV 2006}, pp. 404--417, 2006.

\bibitem{guizar2008efficient}
Guizar-Sicairos, Manuel, Thurman, Samuel T, Fienup, James R. ``Efficient subpixel image registration algorithms,'' \textit{Optics Letters}, vol. 33, pp. 156--158, 2008.

\bibitem{zitova2003image}
Zitova, Barbara, Flusser, Jan. ``Image registration methods: a survey,'' \textit{Image and Vision Computing}, vol. 21, pp. 977--1000, 2003.

\bibitem{casasent1985position}
Casasent, David, Psaltis, Demetri. ``Position, rotation, and scale invariant optical correlation,'' \textit{Applied Optics}, vol. 15, pp. 1795--1799, 1985.

\bibitem{ndt1249285}
Biber, P., Strasser, W.. ``The normal distributions transform: a new approach to laser scan matching,'' \textit{Proceedings 2003 IEEE/RSJ International Conference on Intelligent Robots and Systems (IROS 2003) (Cat. No.03CH37453)}, vol. 3, pp. 2743-2748 vol.3, 2003.

\bibitem{itlarticle}
Principe, Jose, Iii, John. ``Information-Theoretic Learning,'' \textit{Advances in unsupervised adaptive filtering}, 2000.

\bibitem{FITZGIBBON20031145}
Andrew W Fitzgibbon. ``Robust registration of 2D and 3D point sets,'' \textit{Image and Vision Computing}, vol. 21, pp. 1145-1153, 2003.

\bibitem{inproceedingszhou}
"Zhou, Qian-Yi. ``"Fast Global Registration",'' \textit{"Computer Vision -- ECCV 2016"}, pp. "766--782", "2016".

\bibitem{choy2020deep}
Choy, Christopher, Dong, Wei, Koltun, Vladlen. ``Deep Global Registration,'' \textit{Proceedings of the IEEE Conference on Computer Vision and Pattern Recognition}, 2020.

\bibitem{Yang2020TEASERFA}
Heng Yang, J. Shi, L. Carlone. ``TEASER: Fast and Certifiable Point Cloud Registration,'' \textit{ArXiv}, vol. abs/2001.07715, 2020.

\bibitem{confcvprYewL20}
Yew, Zi Jian and Lee, Gim Hee.``RPM-Net: Robust Point Matching Using Learned Features,`` 
\textit{"Conference on Computer Vision and Pattern Recognition (CVPR)"}, "2020".

\bibitem{chen1992object}
Chen, Yang, Medioni, Gerard. ``Object modeling by registration of multiple range images,'' \textit{Proceedings. 1992 IEEE International Conference on Robotics and Automation}, pp. 2724--2729, 1992.

\bibitem{rusinkiewicz2001efficient}
Rusinkiewicz, Szymon, Levoy, Marc. ``Efficient variants of the ICP algorithm,'' \textit{Proceedings Third International Conference on 3-D Digital Imaging and Modeling}, pp. 145--152, 2001.

\bibitem{zhang2020fast}
Zhang, Jiayi, Yao, Yao, Deng, Baoquan. ``Fast and Robust Iterative Closest Point,'' \textit{arXiv preprint arXiv:2007.07627}, 2020.

\bibitem{shi2019iterative}
Shi, Xiaolei, Peng, Ji, Li, Jun. ``The iterative closest point registration algorithm based on the normal distribution transformation,'' \textit{Procedia Computer Science}, vol. 147, pp. 181--190, 2019.

\bibitem{si2022review}
Si, Haiqing, Qiu, Jingxuan, Li, Yao. ``A review of point cloud registration algorithms for laser scanners: Applications in large-scale aircraft measurement,'' \textit{Applied Sciences}, vol. 12, pp. 10247, 2022.

\bibitem{rusu20113d}
Rusu, Radu Bogdan, Cousins, Steve. ``3D is here: Point Cloud Library (PCL),'' \textit{IEEE International Conference on Robotics and Automation (ICRA)}, pp. 1--4, 2011.

\bibitem{segal2009generalized}
Segal, A., Haehnel, D., Thrun, S.. ``Generalized-ICP,'' \textit{Robotics: Science and Systems (RSS)}, 2009.

\bibitem{huang2021comprehensive}
Huang, Xiaoshui, Mei, Guofeng, Zhang, Jian, Abbas, Rana. ``A comprehensive survey on point cloud registration,'' \textit{arXiv preprint arXiv:2103.02690}, 2021.

\bibitem{granger2002em}
Granger, S., Pennec, X.. ``Multi-scale EM-ICP: A fast and robust approach for surface registration,'' \textit{European Conference on Computer Vision (ECCV)}, pp. 418--432, 2002.

\bibitem{li2021semantic}
Li, Xianzhi, Dai, Yinda, Li, Hongdong, Zhang, Zixin. ``Semantic-ICP: Iterative Closest Point for Non-rigid Multi-label Point Cloud Registration,'' \textit{IEEE/CVF Conference on Computer Vision and Pattern Recognition (CVPR)}, pp. 12988--12997, 2021.

\bibitem{zhou2015affine}
Zhou, Feng, De la Torre, Fernando. ``Robust affine registration method using line/surface normals and correspondence,'' \textit{IEEE Transactions on Pattern Analysis and Machine Intelligence}, vol. 37, pp. 1051--1064, 2015.

\bibitem{lin2022kssicp}
Lin, Junyi, Wang, Cheng, Chen, Huan, Huang, Ruihong, Li, Jun. ``KSS-ICP: Point Cloud Registration based on Kendall Shape Space,'' \textit{IEEE International Conference on Robotics and Automation (ICRA)}, pp. 10587--10593, 2022.

\bibitem{gao2023dicp}
Gao, Chenfeng, Wang, Jie, Zheng, Yifan, Chen, Yuxiang, Wang, Fei, Zhang, Jinqiao, Tian, Qi. ``DICP: Doppler Iterative Closest Point Algorithm,'' \textit{IEEE/CVF Conference on Computer Vision and Pattern Recognition (CVPR)}, pp. 1841--1851, 2023.

\bibitem{zhang2024deep}
Zhang, Yu-Xin, Gui, Jie, Yu, Baosheng, Cong, Xiaofeng, Gong, Xin, Tao, Wenbing, Tao, Dacheng. ``Deep learning-based point cloud registration: A comprehensive survey and taxonomy,'' \textit{arXiv preprint arXiv:2404.13830}, 2024.

\bibitem{yew2020rpm}
Yew, Zac, Lee, Gim Hee. ``RPM-Net: Robust point matching using learned features,'' \textit{Proceedings of the IEEE/CVF Conference on Computer Vision and Pattern Recognition}, pp. 11824--11833, 2020.

\bibitem{wang2019deep}
Wang, Yue, Solomon, Justin M.. ``Deep Closest Point: Learning representations for point cloud registration,'' \textit{Proceedings of the IEEE/CVF International Conference on Computer Vision}, pp. 3523--3532, 2019.

\bibitem{qi2017pointnet++}
Qi, Charles R, Yi, Li, Su, Hao, Guibas, Leonidas J. ``PointNet++: Deep hierarchical feature learning on point sets in a metric space,'' \textit{Advances in Neural Information Processing Systems (NeurIPS)}, vol. 30, pp. 5099--5108, 2017.

\bibitem{qi2017pointnet}
Qi, Charles R, Su, Hao, Mo, Kaichun, Guibas, Leonidas J. ``PointNet: Deep learning on point sets for 3D classification and segmentation,'' \textit{Proceedings of the IEEE Conference on Computer Vision and Pattern Recognition (CVPR)}, pp. 652--660, 2017.

\bibitem{wang2019dgcnn}
Wang, Yue, Sun, Yongbin, Liu, Ziwei, Sarma, Sanjay E, Bronstein, Michael M, Solomon, Justin M. ``Dynamic Graph CNN for Learning on Point Clouds,'' \textit{Proceedings of the ACM SIGGRAPH Conference on Computer Vision and Pattern Recognition (CVPR)}, pp. 10206--10215, 2019.

\bibitem{aoki2019pointnetlk}
Aoki, Yasuhiro, Goforth, Hunter, Srivatsan, Ramakrishna, Lucey, Simon. ``PointNetLK: Robust and efficient point cloud registration using PointNet,'' \textit{Proceedings of the IEEE/CVF Conference on Computer Vision and Pattern Recognition}, pp. 7163--7172, 2019.

\bibitem{biber2003normal}
Biber, Peter, Straßer, Wolfgang. ``The normal distributions transform: A new approach to laser scan matching,'' \textit{Proceedings of the IEEE/RSJ International Conference on Intelligent Robots and Systems}, pp. 2743--2748, 2003.

\bibitem{wang2019prnet}
Wang, Yue, Solomon, Justin M. ``PRNet: Self-supervised learning for partial-to-partial registration,'' \textit{Advances in Neural Information Processing Systems (NeurIPS)}, vol. 32, 2019.

\bibitem{huang2021comprehensive}
Huang, Xiaoshui, Mei, Guofeng, Zhang, Jian, Abbas, Rana. ``A comprehensive survey on point cloud registration,'' \textit{arXiv preprint arXiv:2103.02690}, 2021.

\bibitem{jian2005robust}
Jian, Bing, Vemuri, Baba C. ``A robust algorithm for point set registration using mixture of Gaussians,'' \textit{Proceedings of the Tenth IEEE International Conference on Computer Vision (ICCV)}, vol. 2, pp. 1246--1251, 2005.

\bibitem{DeepGMR2020}
Y. Yuan, A. Eckart, and D. Held, ``DeepGMR: Learning Latent Gaussian Mixture Models for Registration,'' in \textit{Proc. European Conf. Comput. Vis. (ECCV)}, 2020, pp. 733–750.

\bibitem{wu20153d}
Z. Wu, S. Song, A. Khosla, F. Yu, L. Zhang, X. Tang, and J. Xiao,
``3D ShapeNets: A deep representation for volumetric shapes,''
\textit{Proceedings of the IEEE Conference on Computer Vision and Pattern Recognition (CVPR)}, pp. 1912--1920, 2015.

\bibitem{articlewu}
Wu, Zongze, Chen, Hongchen, Du, Shaoyi, Fu, Minyue, Zhou, Nan, Zheng, Nanning. ``Correntropy Based Scale ICP Algorithm for Robust Point Set Registration,'' \textit{Pattern Recognition}, vol. 93, 2019.

\bibitem{articlezhang}
Zhang, Xuetao, Jian, Libo, Xu, Meifeng. ``Robust 3D point cloud registration based on bidirectional Maximum Correntropy Criterion,'' \textit{PLOS ONE}, vol. 13, pp. e0197542, 2018.

\bibitem{8500525}
G. {Xu}, S. {Du}, D. {Cui}, S. {Zhang}, B. {Chen}, X. {Zhang}, J. {Xue}, Y. {Gao}. ``Precise Point Set Registration Using Point-to-Plane Distance and Correntropy for LiDAR Based Localization,'' \textit{2018 IEEE Intelligent Vehicles Symposium (IV)}, pp. 734-739, 2018.

\bibitem{lim2025icra-KISSMatcher}
H.~Lim, D.~Kim, G.~Shin, J.~Shi, I.~Vizzo, H.~Myung, J.~Park, and L.~Carlone,
``KISS-Matcher: Fast and Robust Point Cloud Registration Revisited,''
in \emph{Proc. IEEE Int. Conf. Robot. Automat. (ICRA)}, 2025.
[Online]. Available: \url{https://github.com/MIT-SPARK/KISS-Matcher}

\end{thebibliography}
\end{document}